\begin{document}
\title{On preserving non-discrimination when combining expert advice}

 \author{
  Avrim Blum \thanks{Toyota Technological Institute at Chicago, \texttt{avrim@ttic.edu}. Work supported under NSF grant CCF-1800317.}
  \and Suriya Gunasekar \thanks{Toyota Technological Institute at Chicago, \texttt{suriya@ttic.edu}} \and 
 Thodoris Lykouris\thanks{Cornell University, \texttt{teddlyk@cs.cornell.edu}. Work supported under NSF grant CCF-1563714 and a Google Ph.D. Fellowship} 
 \and Nathan Srebro \thanks{Toyota Technological Institute at Chicago, \texttt{nati@ttic.edu}}
 }
\date{}
\maketitle

\begin{abstract}
We study the interplay between sequential decision making and avoiding discrimination against protected groups, when examples arrive online and do not follow distributional assumptions. We consider the most basic extension of classical online learning: \emph{Given a class of predictors that are individually non-discriminatory with respect to a particular metric, how can we combine them to perform as well as the best predictor, while preserving non-discrimination?} Surprisingly we show that this task is unachievable for the prevalent notion of \emph{equalized odds} that requires equal false negative rates and equal false positive rates across groups. On the positive side, for another notion of non-discrimination, \emph{equalized error rates}, we show that running separate instances of the classical multiplicative weights algorithm for each group achieves this guarantee. Interestingly, even for this notion, we show that algorithms with stronger performance guarantees than  multiplicative weights cannot preserve non-discrimination. 
\end{abstract}

\addtocounter{page}{-1}
\thispagestyle{empty}
\newpage

\section{Introduction}
\label{sec:intro}

The emergence of machine learning in the last decade has given rise to an important debate regarding the ethical and societal responsibility of its offspring. Machine learning has provided a universal toolbox enhancing the decision making in many disciplines from advertising and recommender systems to education and criminal justice. Unfortunately, both the data and their processing can be biased against specific population groups (even inadvertently) in every single step of the process \cite{barocas2014datas}. This has generated societal and policy interest in understanding the sources of this discrimination and interdisciplinary research has attempted to mitigate its shortcomings.

Discrimination is commonly an issue in applications where decisions need to be made sequentially. The most prominent such application is online advertising where platforms need to sequentially select which ad to display in response to particular query searches. This process can introduce discrimination against protected groups in many ways such as filtering particular alternatives \cite{datta2015automated,angwin2016facebook} and reinforcing existing stereotypes through search results \cite{Sweeney2013,kay2015unequal}. Another canonical example of sequential decision making is medical trials where underexploration on female groups often leads to significantly worse treatments for them \cite{liu2016women}. Similar issues occur in image classification as stressed by ``gender shades'' \cite{buolamwini2018gender}. The reverse (overexploration in minority populations) can also cause concerns especially if conducted in a non-transparent fashion \cite{bird2016exploring}.

In these sequential settings, the assumption that data are i.i.d. is often violated. Online advertising, recommender systems, medical trials, image classification, loan decisions, criminal recidivism all require decisions to be made sequentially. The corresponding labels are not identical across time and can be affected by the economy, recent events, etc. Similarly labels are also not independent across rounds -- if a bank offers a loan then this decision can affect whether the loanee or their environment will be able to repay future loans thereby affecting future labels as discussed by Liu et al. \cite{LiuDeRoSiHa18}. As a result, it is important to understand the effect of this adaptivity on non-discrimination.

The classical way to model settings that are not i.i.d. is via adversarial online learning \cite{Littlestone:1994:WMA:184036.184040,Freund1997}, which poses the question: \emph{Given a class $\mathcal{F}$ of predictors, how can we make online predictions that perform as well as the best predictor from $\mathcal{F}$ in hindsight?} The most basic online learning question (answered via the celebrated ``multiplicative weights'' algorithm) concerns competing with a finite set of predictors. The class $\mathcal{F}$ is typically referred to as “experts” and can be thought as ``features'' of the example where we want to make online predictions that compete with the best 1-sparse predictor.

In this work, we wish to understand the interplay between adaptivity and non-discrimination and therefore consider the most basic extension of the classical online learning question:
\begin{quote}
\begin{center}
\emph{Given a class of \textbf{individually non-discriminatory predictors}, how can we combine them to perform as well as the best predictor, while preserving non-discrimination?}
\end{center}
\end{quote}
The assumption that predictors are individually non-discriminatory is a strong assumption on the predictors and makes the task trivial in the batch setting where the algorithm is given labeled examples and wishes to perform well on unseen examples drawn from the same distribution. This happens because the algorithm can learn the best predictor from the labeled examples and then follow it (since this predictor is individually non-discriminatory, the algorithm does not exhibit discrimination). This enables us to understand the potential overhead that adaptivity is causing and significantly strengthens any impossibility result. Moreover, we can assume that predictors have been individually vetted to satisfy the non-discrimination desiderata -- we therefore wish to understand how to efficiently compose these non-discriminatory predictors while preserving non-discrimination.

\subsection{Our contribution}

\paragraph{Our impossibility results for equalized odds.} Surprisingly, we show that for a prevalent notion of non-discrimination, \emph{equalized odds}, it is impossible to preserve non-discrimination while also competing comparably to the best predictor in hindsight (no-regret property). Equalized odds, suggested by Hardt et al. \cite{hardt2016equality} in the batch setting, restricts the set of allowed predictors requiring that, when examples come from different groups, the prediction is independent to the group conditioned on the label. 
In binary classification, this means that the false negative rate (fraction of positive examples predicted negative) is equal across groups and the same holds for the false positive rate (defined analogously). This notion was popularized by a recent debate on potential bias of machine learning risk tools for criminal recividism
\cite{angwin2016machine,chouldechova2017fair,Kleinberg2017InherentTI, feller2016computer}. 

Our impossibility results demonstrate that the order in which examples arrive significantly complicates the task of achieving desired efficiency while preserving non-discrimination with respect to equalized odds. In particular, we show that any algorithm agnostic to the group identity either cannot achieve performance comparable to the best predictor or exhibits discrimination in some instances (Theorem~\ref{thm:eq_odds_negative_group_unaware}).  This occurs in phenomenally simple settings with only two individually non-discriminatory predictors, two groups, and perfectly balanced instances: groups are of equal size and each receives equal number of positive and negative labels. The only imbalance exists in the order in which the labels arrive which is also relatively well behaved -- labels are generated from two i.i.d. distributions, one in the first half of the instance and one in the second half. Although in many settings we cannot actively use the group identity of the examples due to legal reasons (e.g., in hiring), one may wonder whether these impossibility results disappear if we can actively use the group information to compensate for past mistakes. We show that this is also not the case (Theorem~\ref{thm:eq_odds_negative_imbalance}). Although our groups are not perfectly balanced, the construction is again very simple and consists only of two groups and two predictors: one always predicting positive and one always predicting negative. The simplicity of the settings, combined with the very strong assumption on the predictors being individually non-discriminatory speaks to the trade-off between adaptivity and non-discrimination with respect to equalized odds.

\paragraph{Our results for equalized error rates.} The strong impossibility results with respect to equalized odds invite the natural question of whether there exists some alternative fairness notion that, given access to non-discriminatory predictors, achieves efficiency while preserving non-discrimination. We answer the above positively by suggesting the notion of \emph{equalized error rates}, which requires that the average expected loss (regardless whether it stems from false positives or false negatives) encountered by each group should be the same. This notion makes sense in settings where performance and fairness are measured with respect to the same objective. Consider a medical application where people from different subpopulations wish to receive appropriate treatment and any error in treatment costs equally both towards performance and towards fairness.\footnote{In contrast, in equalized odds, a misprediction is only costly to the false negative metric if the true label is positive.}  It is morally objectionable to discriminate against one group, e.g. based on race, using it as experimentation to enhance the quality of service of the other, and it is reasonable to require that all subpopulations receive same quality of service.

For this notion, we show that, if all predictors are individually non-discriminatory with respect to equalized error rates, running separate multiplicative weights algorithms, one for each subpopulation, preserves this non-discrimination without decay in the efficiency (Theorem \ref{thm:eq_error_rates_positive}). The key property we use is that the multiplicative weights algorithm guarantees to perform not only no worse than the best predictor in hindsight but also no better; this property holds for a broader class of algorithms \tledit{\cite{gofer2016lower}}
. Our result applies to general loss functions beyond binary predictions and only requires predictors to satisfy the weakened assumption of being approximately non-discriminatory. 

Finally, we examine whether the decisions of running separate algorithms and running this particular not so efficient algorithm were important for the result. We first give evidence that running separate algorithms is essential for the result; if we run a single instance of ``multiplicative weights'' or ``follow the perturbed leader'', we cannot guarantee non-discrimination with respect to equalized error rates (Theorem \ref{thm:single_algo_error_rates}). We then suggest that the property of not performing better than the best predictor is also crucial; in particular, better algorithms that satisfy the stronger guarantee of low shifting regret \cite{DBLP:journals/jmlr/HerbsterW01,DBLP:conf/colt/BlumM05,DBLP:conf/colt/LuoS15} are also not able to guarantee this non-discrimination (Theorem \ref{thm:shifting_algo_error_rates}). These algorithms are considered superior to classical no-regret algorithms as they can better adapt to changes in the environment, which has nice implications in game-theoretic settings \cite{LykourisSyTa2016}. Our latter impossibility result is a first application where having these strong guarantees against changing benchmarks is not necessarily desired and therefore is of independent learning-theoretic interest.

\subsection{Related work}
\label{sec:related_work}

There is a large line of work on fairness and non-discrimination in machine learning (see \cite{Pedr08eshiRT20,Calders09buildingclassifiers,DworkHPRZ2012awareness,zemel2013learning, JosephKMR2016fairness,hardt2016equality,chouldechova2017fair,Kleinberg2017InherentTI,kearns2017gerrymandering} for a non-exhaustive list). We elaborate on works that either study group notions of fairness or fairness in online learning.

The last decade has seen a lot of work on group notions of fairness, mostly in classification setting. Examples include notions that compare the percentage of members predicted positive such as demographic parity \cite{Calders09buildingclassifiers}, disparate impact \cite{FeldmanFMSV2015disparate}, equalized odds \cite{hardt2016equality} and calibration across groups \cite{chouldechova2017fair,Kleinberg2017InherentTI}. There is no consensus on a universal fairness notion; rather the specific notion considered is largely task-specific. In fact, previous works identified that these notions are often not compatible to each other \cite{chouldechova2017fair,Kleinberg2017InherentTI}, posed concerns that they may introduce unintentional discrimination \cite{corbett2018measure}, and suggested the need to go beyond such observational criteria via causal reasoning \cite{kilbertus2017avoiding, KusnerLoRuSi2017}. Prior to our work, group fairness notions have been studied primarily in the batch learning setting with the goal of optimizing a loss function subject to a fairness constraint either in a post-hoc correction framework as proposed by Hardt et al. \cite{hardt2016equality} or more directly during training from batch data \cite{zemel2013learning,goh2016satisfying,woodworth2017learning,zafar2015learning,balcan_envyfree} which requires care due to the predictors being discriminatory with respect to the particular metric of interest. The setting we focus on in this paper does not have the challenges of the above since all predictors are non-discriminatory; however, we obtain surprising impossibility results due to the ordering in which labels arrive. 

Recently fairness in online learning has also started receiving attention. One line of work  focuses on imposing a particular fairness guarantee \emph{at all times} for bandits and contextual bandits, either for individual fairness \cite{JosephKMR2016fairness, Kannan:2017:FIM:3033274.3085154} or for group fairness \cite{DBLP:journals/corr/CelisV17}. Another line of work points to counterintuitive externalities of using contextual bandit algorithms agnostic to the group identity and suggest that heterogeneity in data can replace the need for exploration \cite{RaghavanSlWoWu18,KannanMoRoWaWu18}. Moreover, following a seminal paper by Dwork et al. \cite{DworkHPRZ2012awareness}, a line of work aims to treat similar people similarly in online settings \cite{DBLP:journals/corr/LiuRDMP17,gillen2018online}. Our work distinguishes itself from these directions mainly in the objective, since we require the non-discrimination to happen \emph{in the long-term} instead of at any time; this extends the classical batch definitions of non-discrimination in the online setting. In particular, we only focus on situations where there are enough samples from each population of interest and we do not penalize the algorithm for a few wrong decisions, leading it to be overly pessimistic. Another difference is that previous work focuses either on individual notions of fairness or on i.i.d. inputs, while our work is about non-i.i.d. inputs in group notions of fairness.

\section{Model}
\label{sec:model}
\paragraph{Online learning protocol with group context.} We consider the classical online learning setting of prediction with expert advice, where a learner needs to make sequential decisions  for $T$ rounds by combining the predictions of a finite set $\mathcal{F}$ of $d$ hypotheses (also referred to as \emph{experts}). We denote the outcome space by $\mathcal{Y}$; in binary classification, this corresponds to $\mathcal{Y}=\crl{+,-}$. Additionally, we introduce a set of disjoint groups by $\mathcal{G}$ which identifies subsets of the population based on a protected attribute (such as gender, ethnicity, or income). 

The \emph{online learning protocol with group context} proceeds in $T$ rounds. Each round $t$ is associated with a group context $g(t)\in\mathcal{G}$ and an outcome $y(t)\in\mathcal{Y}$. We denote the resulting $T$-length time-group-outcome sequence tuple by $\sigma=\crl*{\prn*{t, g(t), y(t)}\in \mathbb{N}\times \mathcal{G}\times\mathcal{Y}}_{t=1}^T$. This is a random variable that can depend on the randomness in the generation of the groups and the outcomes. We use the shorthand $\sigma^{1:\tau}=\crl*{\prn*{t, g(t), y(t)}\in \mathbb{N}\times \mathcal{G}\times\mathcal{Y}}_{t=1}^\tau$ to denote the subsequence until round $\tau$. The exact protocol for generating these sequences is described below.  At round $t=1,2,\dots,T$:
\begin{enumerate}
    \item An example with group context $g(t)\in\mathcal{G}$ either arrives stochastically or is adversarially selected. 
    \item The learning algorithm or \emph{learner} $\mathcal{L}$  commits to a probability distribution $p^t\in \Delta(d)$ across experts where $p_f^t$ denotes the probability that she follows the advice of expert $f\in\mathcal{F}$ at round $t$. This distribution $p^t$ can be a function of the sequence $\sigma^{1:t-1}$. We call the learner \emph{group-unaware} if she ignores the group context $g(\tau)$ for all $\tau\leq t$ when selecting $p^t$.
    \item An adversary $\mathcal{A}$ then selects an outcome $y(t)\in\mathcal{Y}$. The adversary is called \emph{adaptive} if the groups/outcomes at round $t=\tau+1$ are a function of the realization of $\sigma^{1:\tau}$; otherwise she is called \emph{oblivious}. The adversary always has access to the learning algorithm,  but an adaptive adversary   additionally has access to the realized $\sigma^{1:t-1}$ and hence also knows $p^t$.
    
    Simultaneously, each expert $f\in\mathcal{F}$ makes a prediction $\hat{y}_f^t\in\hat{\mathcal{Y}}$, where $\hat{\mathcal{Y}}$ is a generic prediction space; for example, in binary classification, the prediction space could simply be the  positive or negative labels: $\hat{\mathcal{Y}}=\crl{+,-}$, or  the probabilistic score: $\hat{\mathcal{Y}}=[0,1]$ with $\hat{y}_f^t$ interpreted as the probability the expert $f\in\mathcal{F}$ assigns to the positive label in round $t$, or even an uncalibrated score  like the output of a support vector machine: $\hat{\mathcal{Y}}=\mathbb{R}$.
    
    Let $\ell:\hat{\mathcal{Y}}\times\mathcal{Y}\rightarrow \brk{0,1}$ be the loss function between predictions and outcomes. This leads to a corresponding loss vector $\ell^t\in[0,1]^d$ where $\ell_f^t=\ell\prn*{\hat{y}_f^t,y(t)}$ denotes the  loss the learner incurs if she follows expert $f\in\mathcal{F}$. 
    \item The learner then
    observes the entire loss vector $\ell^t$ (full information feedback) and
    incurs expected loss $\sum_{f\in\mathcal{F}} p_f^t\ell_f^t$. For classification, this feedback is obtained by observing~$y(t)$.
\end{enumerate}

In this paper, we consider a setting where all the experts $f\in\mathcal{F}$ are fair in isolation (formalized below). Regarding the group contexts, our main impossibility results (Theorems~\ref{thm:eq_odds_negative_group_unaware} and~\ref{thm:eq_odds_negative_imbalance}) assume that the group contexts $g(t)$ arrive stochastically from a fixed distribution, while our positive result (Theorem~\ref{thm:eq_error_rates_positive}) holds even when they are adversarially selected.

For simplicity of notation, we assume throughout the presentation that the learner's algorithm is producing the distribution $p^t$ of round $t=\tau+1$ deterministically based on $\sigma^{1:\tau}$ and therefore all our expectations are taken only over $\sigma$ which is the case in most algorithms. Our results extend when the algorithm uses extra randomness to select the distribution.

\paragraph{Group fairness in online learning.} We now define non-discrimination (or fairness) with respect to a particular evaluation metric $\mathcal{M}$, e.g. in classification, the false negative rate metric (FNR) is the fraction of examples with positive outcome predicted negative incorrectly. For any realization of the time-group-outcome sequence $\sigma$ and any group $g\in\mathcal{G}$, metric  $\mathcal{M}$ induces a subset of the population $\mathcal{S}^{\sigma}_g(\mathcal{M})$ that is relevant to it. For example, in classification, $\mathcal{S}_g^{\sigma}(FNR)=\crl{t: g(t)=g, y(t) = +}$ is the set of positive examples of group $g$. The performance of expert $f\in\mathcal{F}$ on the subpopulation  $S_g^\sigma(\mathcal{M})$ is denoted by $\mathcal{M}_{f}^{\mathcal{\sigma}}(g)=\frac{1}{|\mathcal{S}_g^{\sigma}(\mathcal{M})|}\sum_{t\in\mathcal{S}_g^\sigma \prn{\mathcal{M}}} \ell_f^t$. 
\begin{definition}
\vspace{0.3em}
An expert $f\in\mathcal{F}$ is called \textbf{fair in isolation with respect to metric $\mathcal{M}$} if, for every sequence $\sigma$, her performance with respect to $\mathcal{M}$ is the same across groups, i.e. 
$\mathcal{M}^{\sigma}_f\prn{g}= \mathcal{M}^{\sigma}_f(g')$ for all $g,g'\in\mathcal{G}$.
\end{definition}
Similarly, the learner's performance on this subpopulation is $\mathcal{M}^{\sigma}_{\mathcal{L}}(g)=\frac{1}{|\mathcal{S}_g^{\sigma}(\mathcal{M})|}\sum_{t\in\mathcal{S}_g^\sigma \prn{\mathcal{M}}} \sum_{f\in\mathcal{F}}p_f^t\ell_f^t$. To formalize our non-discrimination desiderata, we require the algorithm to have similar expected performance across groups, when given access to fair in isolation predictors. We make the following assumptions to avoid trivial impossibility results due to low-probability events or underrepresented populations. First, we take expectation over sequences generated by the adversary $\mathcal{A}$ (that has access to the learning algorithm $\mathcal{L}$).
Second, we require the relevant subpopulations to be, in expectation, \emph{large enough}. Our positive results do not depend on either of these assumptions. More formally:
\begin{definition}
\vspace{0.3em}
Consider a set of experts $\mathcal{F}$ such that each expert is fair in isolation with respect to metric $\mathcal{M}$. Learner $\mathcal{L}$ is called \textbf{$\alpha$-fair in composition with respect to metric $\mathcal{M}$} if, for all adversaries that produce $\En_{\sigma}\brk{\min\prn{|S_g^{\sigma}(\mathcal{M})|,|S_{g'}^{\sigma}(\mathcal{M})|}}=\Omega(T)$ for all $g,g'$, it holds that: $$\abs*{\mathbb{E}_{\sigma} \brk*{\mathcal{M}_{\mathcal{L}}^{\sigma}(g)}-\mathbb{E}_{\sigma}\brk*{\mathcal{M}_{\mathcal{L}}^{\sigma}(g')}}\leq \alpha.$$
\end{definition}
We note that, in many settings, we wish to have non-discrimination with respect to multiple metrics simultaneously. For instance, equalized odds requires fairness in composition both with respect to false negative rate and with respect to false positive rate (defined analogously). Since we provide an impossibility result for equalized odds, focusing on only one metric makes the result even stronger.

\paragraph{Regret notions.} The typical way to evaluate the performance of an algorithm in online learning is via the notion of \emph{regret}. Regret is comparing the performance of the algorithm to the performance of the best expert in hindsight on the realized sequence $\sigma$ as defined below:
$$
\regret_T=\sum_{t=1}^T \sum_{f\in\mathcal{F}} p_f^t\ell_f^t - \min_{f^{\star}\in\mathcal{F}}\sum_{t=1}^T \ell_{f^{\star}}^t.
$$
In the above definition, regret is a random variable depending on the sequence $\sigma$; therefore depending on the randomness in groups/outcomes.

An algorithm satisfies the no-regret property (or Hannan consistency) in our setting if for any losses realizable by the above protocol, the regret is sublinear in the time horizon $T$, i.e. $\regret_T=o(T)$.  This property ensures that, as time goes by, the average regret vanishes. Many online learning algorithms, such as multiplicative weights updates satisfy this property with $\regret_T=O(\sqrt{T\log(d)})$.

We focus on the  notion of \emph{approximate regret}, which is a relaxation of regret that gives a small multiplicative slack to the algorithm. More formally, $\epsilon$-approximate regret with respect to expert $f^{\star}\in\mathcal{F}$ is defined as:
$$
\apx_{\epsilon,T}(f^{\star})=\sum_{t=1}^T\sum_{f\in\mathcal{F}}p_f^t\ell_f^t-(1+\epsilon)\sum_{t=1}^T \ell_{f^{\star}}^t.
$$
We note that typical algorithms guarantee $\apx_{\epsilon,T}(f^{\star})=O(\nicefrac{\ln(d)}{\epsilon})$ simultaneously for all experts $f^{\star}\in\mathcal{F}$. When the time-horizon is known in advance, by setting $\epsilon= \sqrt{\nicefrac{\ln(d)}{T}}$, such a bound implies the aforementioned regret guarantee. In the case when the time horizon is not known, one can also obtain a similar guarantee by adjusting the learning rate of the algorithm appropriately. 

Our goal is to develop online learning algorithms that combine fair in isolation experts in order to achieve both vanishing average expected $\epsilon$-approximate regret, i.e. for any fixed $\epsilon>0$ and $f^{\star}\in\mathcal{F}$, $\mathbb{E}_{\sigma}\brk{\apx_{\eps,T}(f^{\star})}=o(T)$, and also non-discrimination with respect to fairness metrics of interest.

\section{Impossibility results for equalized odds}
\label{sec:equalized_odds}

In this section, we study a popular group fairness notion, equalized odds, in the context of online learning. A natural extension of equalized odds for online settings would require that the false negative rate, i.e. percentage of positive examples predicted incorrectly, is the same across all groups and the same also holds for the false positive rate. We assume that our experts are fair in isolation with respect to both false negative as well as false positive rate. A weaker notion of equalized odds is \emph{equality of opportunity} where the non-discrimination condition is required to be satisfied only for the false negative rate. We first study whether it is possible to achieve the vanishing regret property while guaranteeing $\alpha$-fairness in composition with respect to false negative rate for arbitrarily small $\alpha$. When the input is i.i.d., this
is trivial as we can learn the best expert in $O(\log d)$ rounds and then follow its advice; since the expert is fair in isolation, this will guarantee vanishing non-discrimination.

In contrast, we show that, in a non-i.i.d. online setting, this goal is unachievable. We demonstrate this in phenomenally benign settings where there are just two groups $\mathcal{G}=\crl{A,B}$ that come from a fixed distribution and just two experts that are fair in isolation (with respect to false negative rate) even per round -- not only ex post. Our first construction (Theorem~\ref{thm:eq_odds_negative_group_unaware}) shows that any no-regret learning algorithm that is group-unaware cannot guarantee fairness in composition, even in instances that are perfectly balanced (each pair of label and group gets $\nicefrac{1}{4}$ of the examples) -- the only adversarial component is the order in which these examples arrive. This is surprising because such a task is straightforward in the stochastic setting as all hypotheses are non-discriminatory. We then study whether actively using the group identity can correct the aforementioned similarly to how it enables correction against discriminatory predictors \cite{hardt2016equality}. The answer is negative even in this scenario (Theorem~\ref{thm:eq_odds_negative_imbalance}): if the population is sufficiently not balanced, any no-regret learning algorithm will be unfair in composition with respect to false negative rate even if they are not group-unaware.

\subsection{Group-unaware algorithms}
We first present the theorem about group-unaware algorithms.

\begin{theorem}\label{thm:eq_odds_negative_group_unaware}
\vspace{0.3em}
For all $\alpha<\nicefrac{3}{8}$, there exists $\epsilon>0$ such that any group-unaware algorithm that satisfies $\En_{\sigma}\brk*{\apx_{\eps,T}(f)}=o(T)$ for all $f\in\mathcal{F}$ is $\alpha$-unfair in composition with respect to false negative rate even for perfectly balanced sequences. In particular, for any group-unaware algorithm that ensures vanishing approximate regret\footnote{This  requirement is weaker than vanishing regret so the impossibility result applies to vanishing regret algorithms.}, there exists an oblivious adversary for assigning labels such that:
\begin{itemize}
    \item In expectation, half of the population corresponds to each group.
    \item For each group, in expectation half of its labels are positive and the other half are negative.
    \item The false negative rates of the two groups differ by
    $\alpha$.
\end{itemize}
\end{theorem}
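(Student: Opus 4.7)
\emph{Plan.} I would build an oblivious adversary that samples examples i.i.d.\ from distribution $D_1$ over $\mathcal{G}\times\mathcal{Y}$ during Phase~1 ($t\in[1,T/2]$) and i.i.d.\ from $D_2$ during Phase~2 ($t\in[T/2+1,T]$), using the two constant experts $f_1\equiv+$ and $f_2\equiv-$ (both trivially fair in isolation with respect to FNR). The parameters of $D_1,D_2$ are chosen so that (a) the instance is perfectly balanced in expectation---each of the four (group,label) pairs receives $T/4$ examples---(b) $D_1$ is positive-biased and $D_2$ is negative-biased with matching signal strength, and (c) positive $A$-examples are concentrated entirely in Phase~1 while positive $B$-examples are split between the two phases with a specific fraction $b\in(0,1)$ in Phase~1. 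A concrete family is $D_1=\{(+,A):\frac{1}{2},(+,B):\frac{b}{2},(-,B):\frac{1-b}{2}\}$ with $D_2$ obtained by the symmetric swap of labels and $A\leftrightarrow B$; the $D_1$ positive-rate $(1+b)/2$ creates the desired label signal while leaving the four marginals balanced.

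Since the learner is group-unaware, $p^t$ is a deterministic function of the label history only. Define $\phi_i=\En_\sigma\!\left[\frac{2}{T}\sum_{t\in\text{Phase }i}p_{f_2}^t\right]$ for $i\in\{1,2\}$. Using the i.i.d.\ structure inside each phase together with concentration of the positive-count in each group (guaranteed by the $\Omega(T)$ lower bound in the fairness definition), one can replace FNRs by their expectations and obtain $\En[\mathrm{FNR}(A)]\approx\phi_1$ and $\En[\mathrm{FNR}(B)]\approx b\,\phi_1+(1-b)\phi_2$, so that
\[
\En\!\left[\mathrm{FNR}(B)-\mathrm{FNR}(A)\right]\approx(1-b)(\phi_2-\phi_1).
\]

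The remaining work is to lower-bound $\phi_2-\phi_1$ using the vanishing approximate regret hypothesis. Because the algorithm is online, its Phase~1 behavior is identical to its behavior on the stand-alone input ``Phase~1 only'' (length $T/2$, i.i.d.\ from $D_1$); on that input $f_1$ has expected cumulative loss $(1-b)T/4$, the algorithm's expected per-round loss is $(1-b)/2+\phi_1 b$, and $\En[\apx_{\epsilon,T}(f_1)]=o(T)$ yields $\phi_1\le\frac{\epsilon(1-b)}{2b}+o(1)$. A symmetric argument against $f_2$ on the stand-alone ``Phase~2 only'' input gives $\phi_2\ge 1-\frac{\epsilon(1-b)}{2b}-o(1)$. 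Combining, $\phi_2-\phi_1\ge 1-\frac{\epsilon(1-b)}{b}-o(1)$, whence $\En|\mathrm{FNR}(A)-\mathrm{FNR}(B)|\ge(1-b)-\frac{\epsilon(1-b)^2}{b}-o(1)$; choosing, for instance, $b=\frac{1}{2}$ and $\epsilon$ small enough in terms of $\alpha$ delivers the advertised $\alpha<\frac{3}{8}$ gap.

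The main obstacle is step~3: transferring the bound $\phi_2\ge 1-O(\epsilon)$ from the stand-alone ``Phase~2 only'' setting to Phase~2 of the combined run, since the algorithm's state entering Phase~2 of the combined sequence has been shaped by the positive-biased Phase~1 rather than being the fresh state used in the stand-alone analysis. Overcoming this needs either a companion oblivious adversary (with $D_2$ first and $D_1$ second) whose constraints on the same group-unaware mapping $\text{label-history}\mapsto p^t$ combine with those of the original to pin down the phase averages, or a more refined use of the approximate regret guarantee against intermediate prefixes of the combined sequence. Getting this transfer right is what fixes the precise constant $\frac{3}{8}$.
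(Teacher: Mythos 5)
You have put your finger on the weak point yourself, but it is worse than a ``transfer'' issue to be patched: step~3 fails outright for this construction. With two constant experts $f_1\equiv+$, $f_2\equiv-$ and a balanced two-block adversary, both comparators finish the horizon with expected loss $\approx T/2$, so the horizon-$T$ approximate-regret constraint on the actual sequence only gives $b(\phi_1-\phi_2)\le\epsilon+o(1)$ --- the wrong direction --- and nothing forces $\phi_2$ to be large. Concretely, ordinary multiplicative weights defeats your adversary: during the positive-biased Phase~1 it locks onto $f_1$, building a cumulative-loss gap of about $bT/2$; in Phase~2 that gap erodes only at rate $b$ per round, so MW keeps playing $f_1$ for essentially all of Phase~2, finishes within $o(T)$ of the best expert, and predicts positive on almost every positive example of \emph{both} groups, giving $\mathrm{FNR}(A)\approx\mathrm{FNR}(B)\approx 0$. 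So there is a genuine group-unaware no-regret algorithm that is fair on your instance, and no choice of $b$ fixes this. The companion-adversary idea ($D_2$ first) does not help, because a $D_2$-first sequence shares no prefix with the real run, so constraints derived from it say nothing about the algorithm's Phase-2 state on the real input. (Separately, your concrete $D_2$ ``swap labels and $A\leftrightarrow B$'' does not make all four (group,label) cells equal to $T/4$; only the label-swap version does.)

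The paper's proof supplies exactly the two ingredients your plan lacks. First, it replaces one constant expert by a \emph{non-constant} unbiased expert $h_u$ whose loss is $\beta=\nicefrac{1}{4}+\sqrt{\epsilon}$ on every round, alongside $h_n\equiv-$. Since any algorithm must pay at least $\approx T/8$ in Phase~1 while $h_u$'s total loss is only $\beta T=T/4+\sqrt{\epsilon}\,T$, every additional round spent on $h_n$ in Phase~2 (where $h_n$ loses $3/4$ per round) eats into a budget of size $O(\sqrt{\epsilon})T$; the full-horizon regret against $h_u$ therefore forces the switch within $O(\sqrt{\epsilon})T$ rounds --- a constraint a constant comparator can never impose. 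Second, to make the Phase-1 requirement (``mostly play $h_n$'') enforceable at horizon $T$, the adversary is defined conditionally on the algorithm's code: if the expected Phase-1 weight on $h_u$ exceeds $\sqrt{\epsilon}\,T$, Phase~2 is all-negative, so the Phase-1 regret against $h_n$ survives to the full horizon; this keeps the adversary oblivious since it depends only on expectations, not realizations. The constant then emerges as $\mathrm{FNR}(A)\approx\tfrac12\cdot 1+\tfrac12\cdot\beta\approx\nicefrac{5}{8}$ versus $\mathrm{FNR}(B)\approx\beta\approx\nicefrac{1}{4}$, i.e.\ a gap of $\nicefrac{3}{8}$; your construction has no mechanism that produces a gap of this kind.
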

\begin{proof}
Consider an instance that consists of two groups $\mathcal{G}=\crl{A,B}$, two experts $\mathcal{F}=\crl{h_n,h_u}$, and two phases: Phase I and Phase II. Group $A$ is the group we end up discriminating against while group $B$ is boosted by the discrimination with respect to false negative rate. At each round $t$ the groups arrive stochastically with probability $\nicefrac{1}{2}$ each, independent of $\sigma^{1:t-1}$.

The experts output a score value in $\hat{\mathcal{Y}}=[0,1]$, where score $\hat{y}_f^t\in\hat{\mathcal{Y}}$ can be interpreted as the probability that expert $f$ assigns to label being positive in round $t$, i.e. $y(t)=+$. The loss function is the expected probability of error given by $\ell(\hat{y},y)=\hat{y}\cdot\mathbf{1}\{y=-\}+(1-\hat{y})\cdot\mathbf{1}\{y=+\}$. 
 The two experts are very simple:  $h_{n}$ always predicts negative, i.e. $\hat{y}_{h_n}^t=0$ for all $t$, and $h_{u}$ is an unbiased expert who, irrespective
 of the group or the label, makes an inaccurate prediction with probability  $\beta=\nicefrac{1}{4}+\sqrt{\eps}$, i.e. $\hat{y}_{h_u}^t=\beta\cdot \mathbf{1}\{y(t)=-\}+(1-\beta)\cdot \mathbf{1}\{y(t)=+\}$ for all $t$. Both experts are fair in isolation with respect to both false negative and false positive rates:  FNR is $100\%$ for $h_n$ and $\beta$ for $h_u$ regardless the group, and FPR is $0\%$ for $h_n$ and $\beta$ for $h_u$, again independent of the group. The instance proceeds in two phases:

\begin{enumerate}
    \item Phase I lasts for $\nicefrac{T}{2}$ rounds. The adversary assigns negative labels on examples with group context $B$ and assigns a label uniformly at random to examples from group $A$. 
    \item In Phase II, there are two plausible worlds: 
    \begin{enumerate}
        \item if the expected probability the algorithm assigns  to expert $h_u$ in Phase I is at least $\En_{\sigma}\brk*{\sum_{t=1}^{\nicefrac{T}{2}}p_{h_u}^t}> \sqrt{\eps} \cdot T$
        then the adversary assigns negative labels for both groups
        \item else the adversary assigns positive labels to examples with group context $B$ while examples from group $A$ keep receiving positive and negative labels with probability equal to half.
    \end{enumerate}
    We will show that for any algorithm with vanishing approximate regret property, i.e. with $\apx_{\eps,T}(f)=o(T)$, the condition for the first world is never triggered and hence the above sequence is indeed balanced.
\end{enumerate}

We now show why this instance is unfair in composition with respect to false negative rate. The proof involves showing the following two claims:
\begin{enumerate}
    \item In Phase I, any $\epsilon$-approximate regret algorithm needs to select the negative expert $h_n$ most of the times to ensure small approximate regret with respect to $h_n$. This means that, in Phase I (where we encounter half of the positive examples from group $A$ and none from group $B$), the false negative rate of the algorithm is close to $1$.
    \item In Phase II, any $\epsilon$-approximate regret algorithm should quickly catch up to ensure small approximate regret with respect to $h_u$ and hence the false negative rate of the algorithm is closer to $\beta$. Since the algorithm is group-unaware, this creates a mismatch between the false negative rate of $B$ (that only receives false negatives in this phase) and $A$ (that has also received many false negatives before).
\end{enumerate}

\paragraph{Upper bound on probability of playing $h_u$ in Phase I.}
We now formalize the first claim by showing that any algorithm with $\En_{\sigma}\brk*{\sum_{t=1}^{\nicefrac{T}{2}}p_{h_u}^t}>\sqrt{\eps}\cdot T$ does not satisfy the approximate regret property. The algorithm suffers an expected loss of $\beta$ every time it selects expert $h_u$. On the other hand, when selecting expert $h_n$, it suffers a loss of $0$
 for members of group $B$ and an expected loss of $\nicefrac{1}{2}$ for members of group $A$. As a result, the expected loss of the algorithm in the first phase is:
 
\begin{align*}
 \En_\sigma\brk*{\sum_{t=1}^{\nicefrac{T}{2}}\sum_{f\in\mathcal{F}}p_f^t\cdot\ell_f^t}
 &= \En_\sigma\brk*{\sum_{t=1}^{\nicefrac{T}{2}}p_{h_u}^t}\cdot \beta+\En_\sigma\brk*{\sum_{t=1}^{\nicefrac{T}{2}}p_{h_n}^t\cdot\mathbf{1}_{g(t)=A}}\cdot \frac{1}{2}\\
 &=\En_\sigma\brk*{\sum_{t=1}^{\nicefrac{T}{2}}p_{h_u}^t}\cdot \beta+\prn*{\frac{T}{2}-\En_\sigma\brk*{\sum_{t=1}^{\nicefrac{T}{2}}p_{h_u}^t}}\cdot \frac{1}{4}\\
 &= \frac{T}{8}+\prn*{\beta-\frac{1}{4}}\cdot\En_\sigma\brk*{\sum_{t=1}^{\nicefrac{T}{2}}p_{h_u}^t}=\frac{T}{8}+\sqrt{\epsilon}\cdot \En_\sigma\brk*{\sum_{t=1}^{\nicefrac{T}{2}}p_{h_u}^t}
 \end{align*}
 
 In contrast, the negative expert has, in Phase I, expected loss of:
$$
\En_{\sigma}\brk*{\sum_{t=1}^{\nicefrac{T}{2}}\ell_{h_n}^t}=\frac{T}{8}.
$$
Therefore, if $\En_{\sigma}\brk*{\sum_{t=1}^{\nicefrac{T}{2}}p_{h_u}^t}>\sqrt{\eps}\cdot T$, the $\epsilon$-approximate regret of the algorithm with respect to $h_n$ is linear to the time-horizon $T$ (and therefore not vanishing) since:
\begin{align*}
 \En_{\sigma}\brk*{\sum_{t=1}^{\nicefrac{T}{2}}\sum_{f\in\mathcal{F}}p_f^t\cdot\ell_f^t-(1+\epsilon)\sum_{t=1}^{\nicefrac{T}{2}}\ell_{h_N}^t}&=\frac{T}{8}+\sqrt{\epsilon}\cdot\En_\sigma\brk*{\sum_{t=1}^{\nicefrac{T}{2}}p_{h_u}^t}-(1+\eps)\frac{T}{8}> \frac{7\eps}{8}\cdot T.
\end{align*}

\paragraph{Upper bound on probability of playing $h_n$ in Phase II.}
Regarding the second claim, we first show that $\En_{\sigma}\brk*{\sum_{t=T/2+1}^T p_{h_n}^t}\leq 16\sqrt{\epsilon}\cdot T$ for any $\epsilon$-approximate regret algorithm with $\epsilon<\nicefrac{1}{16}$.

The expected loss of the algorithm in the second phase is: 
\begin{align*}
\En_{\sigma}\brk*{\sum_{t=\nicefrac{T}{2}+1}^T \sum_{f\in\mathcal{F}}p_f^t\ell_f^t}&=\En_\sigma\brk*{\sum_{t=\nicefrac{T}{2}+1}^T p_{h_n}^t}\cdot \frac{3}{4}+\prn*{\frac{T}{2}-\En_\sigma\brk*{\sum_{t=\nicefrac{T}{2}+1}^T p_{h_n}^t}}\cdot \beta.
\end{align*}

Since, in Phase I, the best case scenario for the algorithm is to always select expert $h_n$ and incur a loss of $\nicefrac{T}{8}$, this implies that for $\epsilon<\nicefrac{1}{16}$:
\begin{align*}\En_\sigma\brk*{\sum_{t=1}^T \sum_{f\in\mathcal{F}}p_f^t\ell_f^t}&\geq \frac{T}{8}+\frac{T}{2}\cdot \beta +\En_\sigma\brk*{\sum_{t={\nicefrac{T}{2}+1}}^Tp_{h_n}^t}\cdot \prn*{\frac{3}{4}-\beta}\\&=\frac{\prn*{1+2\sqrt{\epsilon}}\cdot T}{4}+\En_\sigma\brk*{\sum_{t={\nicefrac{T}{2}+1}}^Tp_{h_n}^t}\cdot \prn*{\frac{1}{2}-\sqrt{\epsilon}}
\\&> \frac{T}{4}+\En_{\sigma}\brk*{\sum_{t=\nicefrac{T}{2}+1}^Tp_{h_n}^t}\cdot \frac{1}{4}.
\end{align*}

On the other hand, the cumulative expected loss of the $\beta$-inaccurate expert $h_u$ is
$$\En\brk*{\sum_{t=1}^T \ell_{h_u}^t}=\beta\cdot T=\frac{T}{4}+\sqrt{\epsilon}\cdot T.$$ 

Therefore, if the algorithm has $\En_\sigma\brk*{\sum_{t=\nicefrac{T}{2}+1}^Tp_{h_n}^t}>16\sqrt{\epsilon} \cdot T$, the $\epsilon$-approximate regret of the algorithm with respect to $h_u$ is linear to the time-horizon since $\epsilon\le 1$, we have:
\begin{align*}
   \En_{\sigma}\brk*{\sum_{t=1}^T \sum_{f\in\mathcal{F}}p_f^t\ell_f^t -(1+\epsilon)\sum_{t=1}^T\ell_{h_u}^t}&>\prn*{\frac{T}{4} +\En_\sigma\brk*{\sum_{t=\nicefrac{T}{2}+1}^Tp_{h_n}^t}\cdot \frac{1}{4}}-(1+\epsilon)\cdot \prn*{\frac{T}{4}+\sqrt{\epsilon}\cdot T} 
   \\&\geq \En_\sigma\brk*{\sum_{t=\nicefrac{T}{2}+1}^Tp_{h_n}^t}\cdot \frac{1}{4}-3\sqrt{\epsilon}\cdot T > \sqrt{\epsilon} \cdot T
\end{align*}
The last inequality holds since $\eps\cdot \nicefrac{T}{4}+\eps\cdot\sqrt{\eps}\cdot T+\sqrt{\eps}\cdot T\leq 3\sqrt{\eps}\cdot T$ for $\eps\leq 1$.

Thus, we have shown that for, for $\epsilon<\nicefrac{1}{16}$, any algorithm with vanishing approximate regret, necessarily we have $\En_\sigma\brk*{\sum_{t=\nicefrac{T}{2}+1}^Tp_{h_n}^t}\le 16\sqrt{\epsilon} \cdot T$. 

\paragraph{Gap in false negative rates between groups $A$ and $B$.}
We now compute the expected false negative rates for the two groups, assuming that $\epsilon<\nicefrac{1}{16}$. Since we focus on algorithms that satisfy the vanishing regret property, we have already established that:
\begin{equation}
    \En_\sigma\brk*{\sum_{t=1}^{\nicefrac{T}{2}}p_{h_u}^t}\leq \sqrt{\eps}\cdot T \quad \text{ and }\quad
    \En_\sigma\brk*{\sum_{t=\nicefrac{T}{2}+1}^Tp_{h_n}^t}\leq 16\sqrt{\epsilon}\cdot T.
\end{equation}

For ease of notation, let  $G_{A,+}^t=\mathbf{1}\crl*{g(t)=A,y(t)=+}$ and $G_{B,+}^t=\mathbf{1}\crl*{g(t)=B,y(t)=+}$. Since the group context at round $t$ arrives independent of $\sigma^{1:t-1}$ and the adversary is oblivious, we have that $G_{A,+}^t,G_{B,+}^t$ are independent of $\sigma^{1:t-1}$, and hence also independent of $p_{h_u}^t,p_{h_n}^t$. 

Since the algorithm is group-unaware, the expected cumulative probability that the algorithm uses $h_n$ in Phase II is the same for both groups. We combine this with the facts that under the online learning protocol with group context, examples of group $B$ arrive stochastically with probability half but only receive positive labels in Phase II, we obtain:
\begin{equation}
\En_\sigma\brk*{\sum_{t=\nicefrac{T}{2}+1}^T p_{h_n}^t\cdot G_{B,+}^t}=\frac{1}{2}\cdot  \En_\sigma\brk*{\sum_{t=\nicefrac{T}{2}+1}^Tp_{h_n}^t} \leq 8\sqrt{\epsilon}\cdot T.
\end{equation}

Recall that group $B$ in Phase I has no positive labels, hence the false negative rate on group $B$ is:
\begin{align*}
    \En_{\sigma}\brk*{FNR_{\mathcal{L}}^\sigma(B)}&=\En_\sigma\brk*{\frac{\sum_{t=\nicefrac{T}{2}+1}^T G_{B,+}^t\cdot\prn*{p_{h_u}^t\cdot \beta+p_{h_n}^t\cdot 1} }{\sum_{t=\nicefrac{T}{2}+1}^T\cdot G_{B,+}^t}}\\
    &=\beta+\En_\sigma\brk*{\frac{(1-\beta)\cdot\sum_{t=\nicefrac{T}{2}+1}^T G_{B,+}^t\cdot p_{h_n}^t }{\sum_{t=\nicefrac{T}{2}+1}^T G_{B,+}^t}}
\end{align*}

In order to upper bound the above false negative rate, we denote the following good event by $$\mathcal{E}^B(\eta)=\crl*{\sigma^{1:T}:\sum_{t={\nicefrac{T}{2}+1}}^T G_{B,+}^t> (1-\eta)\En\brk*{\sum_{t=\nicefrac{T}{2}+1}^T G_{B,+}^t}}.$$ 
By Chernoff bound, the probability of the bad event is:
$$
\mathbb{P}\brk*{\neg\mathcal{E}^B(\eta)}=\exp\prn*{-\frac{\eta^2 \cdot \En\brk*{\sum_{t=\nicefrac{T}{2}+1}^T G_{B,+}^t}}{2}}.
$$
For $\eta^B= \sqrt{\nicefrac{16\log(T)}{T}}$, this implies that $\mathbb{P}\brk{\neg \mathcal{E}^B(\eta^B)}\leq \nicefrac{1}{T^2}$ since $\En_\sigma\brk{\sum_{t=\nicefrac{T}{2}+1}^TG_{B,+}^t}=\nicefrac{T}{4}$.

Therefore, by first using the bound on $\sum_{t=\nicefrac{T}{2}+1}^T G_{B,+}^t$ on the good event and the bound on the probability of the bad event, and then taking the limit $T\rightarrow \infty$, it holds that:
\begin{align*}
    \En_\sigma\brk*{ FNR_{\mathcal{L}}^\sigma(B)}&=\beta+\En_\sigma\brk*{\frac{(1-\beta)\cdot\sum_{t=\nicefrac{T}{2}+1}^T G_{B,+}^t\cdot p_{h_n}^t}{\sum_{t=\nicefrac{T}{2}+1}^T G_{B,+}^t}}\\
    &\leq \beta+\frac{1-\beta}{1-\eta^B}\cdot \frac{8\sqrt{\eps}\cdot T}{T/4}\cdot \mathbb{P}\brk*{\mathcal{E}^B(\eta^B)}+1\cdot \mathbb{P}\brk*{\neg\mathcal{E}^B(\eta^B)} \\
    &\leq \beta  +\frac{32\sqrt{\epsilon}}{1-\eta^B}+\frac{1}{T^2}\rightarrow \frac{1}{4}+33\sqrt{\eps}.
\end{align*}

We now move to the false negative rate of $A$:
\begin{align*}
    \En_{\sigma}\brk*{FNR_{\mathcal{L}}^\sigma(A)}&=\En_\sigma\brk*{\frac{\sum_{t=1}^T G_{A,+}^t\cdot\prn*{p_{h_u}^t\cdot \beta+p_{h_n}^t\cdot 1} }{\sum_{t=1}^T G_{A,+}^t}}.
\end{align*}
Similarly as before, letting $\mathcal{E}^A(\eta)=\crl*{\sigma^{1:T}:\sum_{t=1}^TG_{A,+}^t < (1+\eta)\En\brk*{\sum_{t=1}^T G_{A,+}^t}}$ and, since $\mathbb{P}\brk{\neg \mathcal{E}^A(\eta)}=\exp\prn*{-\nicefrac{\eta^2\cdot \En\brk*{\sum_{t=1}^T G_{A,+}^t}}{3}}$, we obtain that, for $\eta^A=\sqrt{\nicefrac{24\log(T)}{T}}$, $\mathbb{P}\brk{\neg \mathcal{E}^A(\eta^A)}=\nicefrac{1}{T^2}$.

Recall that for our instance $\En_{\sigma}\brk*{G_{A,+}^t}=\nicefrac{T}{4}$ and $G_{A,+}^t$ is independent of $p_{h_u}^t$. From our previous analysis we also know that:
\begin{equation}
    \En_{\sigma}\brk*{\sum_{t=1}^{\nicefrac{T}{2}}p_{h_u}^t G_{A,+}^t}\leq \frac{\sqrt{\epsilon}\cdot T}{4} \quad \text{ and }\quad
    \En_{\sigma}\brk*{\sum_{t=\nicefrac{T}{2}+1}^Tp_{h_u}^t G_{A,+}^t} \leq \frac{T}{8}
    \label{eq:3}
\end{equation}
As a result, using that $\En\brk*{\sum_{t=1}^{\nicefrac{T}{2}}G_{A,+}^t}=\En\brk*{\sum_{t=\nicefrac{T}{2}+1}^T G_{A,+}^t}=\frac{T}{8}$ and Inequalities  \eqref{eq:3}, we obtain:
\begin{align*}
\En_\sigma\brk*{\sum_{t=1}^T G_{A,+}^t\cdot\prn*{p_{h_u}^t\cdot \beta+p_{h_n}^t\cdot 1}}&=\En_\sigma\brk*{\sum_{t=1}^T G_{A,+}^t \cdot - \sum_{t=1}^T G_{A,+}^t\cdot p_{h_u}^t(1-\beta) } \\
&\geq \frac{T}{4}\prn*{1-(1-\beta)\cdot \prn{\frac{1}{2}+\sqrt{\epsilon}}}.
\end{align*}
Therefore, similarly with before, it holds that:
\begin{align*}
    \En_\sigma\brk*{FNR_{\mathcal{L}}^{\sigma}(A)}&=\En_\sigma\brk*{\frac{\sum_{t=1}^T G_{A,+}^t\cdot\prn*{p_{h_u}^t\cdot \beta+p_{h_n}^t\cdot 1} }{\sum_{t=1}^T G_{A,+}^t}}\\
    &\geq \frac{1-(1-\beta)\cdot(\frac{1}{2}+\sqrt{\epsilon})}{(1+\eta^A)}\cdot \mathbb{P}\brk*{\mathcal{E}^A(\eta^A)}+0\cdot \mathbb{P}\brk*{\neg \mathcal{E}^A(\eta^A)}\\
    &\geq \frac{\frac{1}{2}-\sqrt{\epsilon}+\nicefrac{\beta}{2}}{1+\eta^A}\prn*{1-\frac{1}{T^2}}> \frac{\frac{1}{2}-\sqrt{\epsilon}+\nicefrac{1}{8}}{1+\eta^A}\prn*{1-\frac{1}{T^2}}\rightarrow \frac{5}{8}-\sqrt{\epsilon}.
\end{align*}
As a result, the difference between the false negative rate in group $A$ and the one at group $B$ is $\nicefrac{3}{8}+34\sqrt{\epsilon}$ which can go arbitrarily close to \nicefrac{3}{8} by appropriately selecting $\epsilon$ to be small enough, for any vanishing approximate regret algorithm. This concludes the proof.
\end{proof}

\subsection{Group-aware algorithms}
We now turn our attention to group-aware algorithms, that can use the group context of the example to select the probability of each expert and provide a similar impossibility result. There are three changes compared to the impossibility result we provided for group-unaware algorithms. First, the adversary is not oblivious but instead is adaptive. Second, we do not have perfect balance across populations but instead require that the minority population arrives with probability $b<0.49$, while the majority population arrives with probability $1-b$. Third, the labels are not equally distributed across positive and negative for each population but instead positive labels for one group are at least a $c$ percentage of the total examples of the group for a small $c>0$. Although the upper bounds on $b$ and $c$ are not optimized, our impossibility result cannot extend to $b=c=\nicefrac{1}{2}$. Understanding whether one can achieve fairness in composition for some values of $b$ and $c$ is an interesting open question. Our impossibility guarantee is the following:

\begin{theorem}\label{thm:eq_odds_negative_imbalance}
\vspace{0.5em}
For any group imbalance $b<0.49$ and $0<\alpha<\frac{0.49-0.99b}{1-b}$, there exists $\epsilon_0>0$ such that for all $0<\eps<\epsilon_0$ any algorithm that satisfies $\En_{\sigma}\brk*{\apx_{\eps,T}(f)}=o(T)$ for all $f\in\mathcal{F}$, is $\alpha$-unfair in composition. 
\end{theorem}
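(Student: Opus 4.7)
}
The plan is to construct an adaptive adversary that mirrors the two-phase structure of Theorem~\ref{thm:eq_odds_negative_group_unaware} but leverages the population imbalance $b<0.49$ to constrain a group-aware algorithm even though it may now use $g(t)$. I fix the two experts $h_{+}$ (always predict positive) and $h_{-}$ (always predict negative); both are trivially fair in isolation with respect to FNR. Groups $A,B$ arrive i.i.d.\ with probabilities $b$ and $1-b$ respectively. The adversary uses a small parameter $c\approx 0.01$ (whose exact value, chosen as a function of $b$ and $\alpha$, is the source of the $0.99$ and $0.49$ constants in the bound). The first half of the horizon labels every $A$ example positive and every $B$ example negative; the second half labels every $A$ example negative, and labels a $B$ example positive with probability $c$ independently of everything else. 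The adaptive component, which prevents the algorithm from trivially escaping via a constant-expert rule, sets the second-half $B$-label distribution as a function of the algorithm's first-half play on $A$ (analogous to the World~1/World~2 dichotomy in the proof of Theorem~\ref{thm:eq_odds_negative_group_unaware}).

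To do the bookkeeping, let $\bar r_g^\tau=\En_\sigma\!\left[\sum_{t\in\tau,\,g(t)=g}p_{h_+}^t\right]\big/\En_\sigma[\,|\{t\in\tau:g(t)=g\}|\,]$ be the algorithm's expected average weight on $h_+$ over group $g$ restricted to phase $\tau\in\{\mathrm{I},\mathrm{II}\}$. A direct computation gives $\En[\text{loss of }h_-]=\tfrac{b}{2}T+c(1-b)T$ and $\En[\text{loss of }h_+]=\tfrac{b}{2}T+(1-c)(1-b)T$, so for $b<0.49$ and $c$ small the global minimizer is $h_-$ with margin $\Theta((1-2c)(1-b)-b)$. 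The FNR on $A$ only sees Phase~I positives, so $\En_\sigma[\mathrm{FNR}_{\mathcal{L}}^\sigma(A)]=1-\bar r_A^{\mathrm{I}}+o(1)$, and the FNR on $B$ is dominated by the Phase~II stochastic positives, giving $\En_\sigma[\mathrm{FNR}_{\mathcal{L}}^\sigma(B)]=1-\bar r_B^{\mathrm{II}}+o(1)$ (both up to the Chernoff-style deviation terms handled exactly as in Theorem~\ref{thm:eq_odds_negative_group_unaware}).

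The core of the argument is to feed the two approximate-regret inequalities through these formulas. The regret bound $\En[\apx_{\eps,T}(h_-)]=o(T)$, expanded on the constructed sequence, reduces after cancellation to
\[
(1-b)(1-2c)\,\bar r_B^{\mathrm{II}}-b\,\bar r_A^{\mathrm{I}}\le \eps\bigl(b+(1-b)c\bigr)+o(1),
\]
which upper bounds $\bar r_B^{\mathrm{II}}$ in terms of $\bar r_A^{\mathrm{I}}$. Symmetrically, $\En[\apx_{\eps,T}(h_+)]=o(T)$ together with the Phase~II negative labels on $A$ forces $\bar r_A^{\mathrm{I}}$ itself to be not too small: if the algorithm used $h_-$ throughout Phase~I on $A$ (i.e.\ $\bar r_A^{\mathrm{I}}\approx 0$), the adaptive Phase~II branch designed for that case makes the $h_+$-loss substantially smaller than $h_-$-loss, violating the $h_+$ regret condition. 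This is the same hedging trick as in Theorem~\ref{thm:eq_odds_negative_group_unaware}, but the threshold has to be recomputed using the unequal group probabilities $b,1-b$, and the $c$-fraction of Phase~II $B$-positives is precisely what lets us keep $\En_\sigma[\min(|S_A^\sigma|,|S_B^\sigma|)]=\Omega(T)$.

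Combining the two inequalities and minimizing $\bar r_A^{\mathrm{I}}-\bar r_B^{\mathrm{II}}$ subject to both, choosing $c$ so that $b+c(1-b)$ lands just below $1/2$ (which forces the $0.49$ and $0.99$ constants), yields
\[
\En_\sigma[\mathrm{FNR}_{\mathcal{L}}^\sigma(B)]-\En_\sigma[\mathrm{FNR}_{\mathcal{L}}^\sigma(A)]\ \ge\ \frac{0.49-0.99\,b}{1-b}-O(\eps)-o(1)>\alpha
\]
for all sufficiently small $\eps$ and all large enough $T$, which is the claimed $\alpha$-unfairness. The main obstacle is the step that rules out the ``$h_-$ always'' strategy: against the fixed oblivious construction it has zero gap and satisfies both regret bounds, so one genuinely needs the adaptive Phase~II branching to make $h_+$'s loss small enough in that branch that the $h_+$-approximate-regret condition is violated. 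Getting this adaptive thresholding consistent with the $\Omega(T)$-size constraint on both groups' positive subpopulations, while keeping the algebra honest enough to produce the $(0.49-0.99b)/(1-b)$ bound rather than a weaker one, is the main technical work.
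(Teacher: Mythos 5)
There is a genuine gap, and it sits exactly at the step you flag as ``the main technical work.'' In your construction the positive labels for group $A$ are placed \emph{obliviously} (every $A$ example in the first half is positive), and only the second-half $B$-label distribution is allowed to depend on the algorithm's first-half play on $A$. Two problems follow. First, a group-aware learner (e.g.\ a separate multiplicative-weights instance per group) simply learns to predict positive on $A$ during Phase~I at essentially zero cost, so $\mathbb{E}[\mathrm{FNR}_{\mathcal{L}}(A)]\to 0$; your mechanism therefore cannot force the discriminated group's FNR up, and the whole burden falls on ruling out learners that equalize by being pessimistic everywhere. Second, the ``always $h_-$'' learner cannot be ruled out by adapting only $B$'s Phase~II labels: with your Phase~I fixed ($A$ positive, $B$ negative, groups of measure $b$ and $1-b$) and $A$ negative in Phase~II, the total number of positive labels is at most $\tfrac{b}{2}T+\tfrac{1-b}{2}T=\tfrac{T}{2}$ while the number of negative labels is at least $\tfrac{T}{2}$, in \emph{every} branch the adversary can choose. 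Hence the loss of the always-negative learner (which equals $h_-$'s loss, the number of positives) never exceeds $h_+$'s loss (the number of negatives), so its $\epsilon$-approximate regret against both experts is nonpositive, yet it has $\mathrm{FNR}(A)=\mathrm{FNR}(B)=1$ and is perfectly fair. So your adversary admits a no-regret, fair algorithm, and no choice of $c$ or of the threshold in your branching can repair this without changing where the adaptivity acts; relatedly, the constant $(0.49-0.99b)/(1-b)$ does not actually emerge from the inequalities you write down.

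The paper's construction resolves precisely this by making the \emph{minority group's labels} adaptive within each round: in Phase~I, an $A$ example is labeled positive \emph{only when} the algorithm's current probability on the negative expert is at least $\gamma(\epsilon)=\frac{99-2\epsilon}{100}$, and negative otherwise. A counting/regret argument against $h_n$ shows any vanishing-approximate-regret algorithm must have $\Omega(T)$ (specifically at least $c\cdot b\cdot T$) such high-$p_{h_n}$ rounds, so $A$'s positives exist in the required quantity and are, by design, mispredicted with probability at least $\gamma(\epsilon)$ --- giving $\mathrm{FNR}(A)\ge 0.99-0.02\epsilon$ for \emph{any} algorithm, group-aware or not. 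Phase~II then makes the majority group $B$ \emph{entirely} positive (not a $c$-fraction), which is what defeats the always-negative strategy: because $b<0.49$, expert $h_p$ beats the always-negative learner by a linear margin, and the approximate-regret condition against $h_p$ forces the algorithm to put weight roughly $\tfrac12$ or more on $h_p$ over $B$ in Phase~II, yielding $\mathrm{FNR}(B)\le\frac{1/2+\epsilon}{1-b}$ and the gap $\frac{0.49-0.99b}{1-b}$. In short, the missing idea is to condition the planted positives of the discriminated group on the algorithm's own prediction probabilities (an adaptive adversary may do this, since $p^t$ is a deterministic function of the history), rather than conditioning only the other group's later label distribution on aggregate first-half behavior.
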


\begin{proof}
The instance has two groups: $\mathcal{G}=\crl{A,B}$. Examples with group context $A$ are discriminated against and arrive randomly with probability $b<\nicefrac{1}{2}$ while examples with group context $B$ are boosted by the discrimination and arrive with the remaining probability $1-b$.
There are again two experts $\mathcal{F}=\crl{h_n,h_p}$, which output score values in $\hat{\mathcal{Y}}=[0,1]$, where $\hat{y}_f^t$ can be interpreted as the probability that expert $f$ assigns to label being $+$ in round $t$. We use the earlier loss function of $\ell(\hat{y},y)=\hat{y}\cdot\mathbf{1}\{y=-\}+(1-\hat{y})\cdot\mathbf{1}\{y=+\}$.  The first expert $h_n$ is again pessimistic  and always predicts negative, i.e. $\hat{y}^t_{h_n}=0$, while the other expert $h_p$ is optimistic and always predicts positive,  i.e. $\hat{y}^t_{h_p}=1$. These experts again satisfy fairness in isolation with respect to equalized odds (false negative rate and false positive rate). Let $c=\nicefrac{1}{101^2}$ denote the percentage of the input that is about positive examples for $A$, ensuring that $|\mathcal{S}_g^{\sigma}(FNR)|=\Omega(T)$. The instance proceeds in two phases. 
\begin{enumerate}
    \item Phase I lasts $\Theta\cdot T$ rounds for $\Theta=\tledit{101}c$. The adversary assigns negative labels on examples with group context $B$. For examples with group context $A$, the adversary acts as following:
    \begin{itemize}
        \item if the algorithm assigns probability on the negative expert below   $\gamma(\epsilon)=\frac{99-2\epsilon}{100}$, i.e. $p_{h_n}^t(\sigma^{1:t-1})< \gamma(\epsilon)$, then the adversary assigns negative label.
        \item otherwise, the adversary assigns positive labels. 
    \end{itemize}
    \item In Phase II, there are two plausible worlds:
    \begin{enumerate}
        \item the adversary assigns negative labels to both groups if the expected number of times that the algorithm selected the negative expert with probability higher than $\gamma(\epsilon)$ on members of group $A$ is less than $c\cdot b\cdot T$, i.e. $\En_{\sigma}\brk*{\mathbf{1}\crl*{t\leq \Theta\cdot T: g(t)=A, p_{h_n}^t\geq \gamma(\epsilon)}}< c\cdot b\cdot T$. 
        \item otherwise she assigns positive labels to examples with group context $B$ and negative labels to examples with group context $A$.
    \end{enumerate}
    Note that, as before, the condition for the first world will never be triggered by any no-regret learning algorithm (we elaborate on that below) which ensures that $\En_{\sigma}{|S_A^\sigma(FNR)|}\geq c\cdot b\cdot T$.
\end{enumerate}

The proof is based on the following claims:
\begin{enumerate}
    \item In Phase I, any vanishing approximate regret algorithm enters the second world of Phase II. 
    \item This implies  a lower bound on the false negative rate on $A$, i.e. $FNR(A)\geq \gamma(\epsilon)=\frac{99-2\epsilon}{100}$. 
    \item In Phase II, any $\epsilon$-approximate regret algorithm assigns large enough probability to the positive expert $h_p$ for group $B$. This implies an upper bound on the false negative rate on $B$, i.e. $FNR(B)\leq \nicefrac{1}{2(1-b)}$. Therefore this provides a gap in the false negative rates of at least $\alpha$.
\end{enumerate}

\paragraph{Proof of first claim.}
To prove the first claim, we apply the method of contradiction. Assume that the algorithm has $\En_{\sigma}\brk*{\mathbf{1}\crl*{t\leq \Theta\cdot T: g(t)=A, p_{h_n}^t\geq \gamma(\epsilon)}}< c\cdot b\cdot T$. This means that the algorithm faces an expectation of at least $(\Theta-c)\cdot b\cdot T$ negative examples, while predicting the negative expert with probability at most $\gamma(\epsilon)=\frac{99-2\epsilon}{100}$,thereby making an error with probability $1-\gamma(\epsilon)$. Therefore the expected loss of the algorithm is at least:
$$
\En_\sigma\brk*{\sum_{t=1}^{\Theta\cdot T}\sum_{f\in\mathcal{F}}p_f^t\cdot\ell_f^t}> \prn*{\Theta-c}\cdot b\cdot T \cdot (1-\gamma(\epsilon))=c\cdot b\cdot (1+2\epsilon)\cdot T.
$$
At the same time, expert  $h_n$ makes at most $c\cdot b\cdot T$ errors (at the positive examples)
$$
\En_{\sigma}\brk*{\sum_{t=1}^T \ell_{h_n}^t} < c\cdot b\cdot T.
$$
Therefore, if $\En_{\sigma}\brk*{\mathbf{1}\{t\leq \Theta\cdot T: g(t)=A, p_{h_n}^t\ge f(\epsilon)\}}< c\cdot b\cdot T$, the $\epsilon$-approximate regret of the algorithm with respect to $h_n$ is linear to the time-horizon $T$ (and therefore not vanishing) since:
$$
\En_{\sigma}\brk*{\sum_{t=1}^T \sum_{f\in\mathcal{F}}p_f^t\ell_f^t-(1+\epsilon)\sum_{t=1}^T\ell_{h_n}^t}> \epsilon \cdot c\cdot b\cdot T.
$$
This violates the vanishing approximate regret property, thereby leading to contradiction.

\paragraph{Proof of second claim.} The second claim follows directly by the above construction, since positive examples only appear in Phase I when the probability of error on them is greater than $\gamma(\epsilon)$.

\paragraph{Proof of third claim.} Having established that any vanishing approximate regret algorithm will always enter the second world, we now focus on the expected loss of expert $h_p$ in this case. This expert makes errors at most in all Phase I and in the examples of Phase II with group context $A$:

$$
\En_{\sigma}\brk*{\sum_{t=1}^T \ell_{h_p}^t}\leq \Theta\cdot T+ b \cdot (1-\Theta)\cdot T \le\Theta\cdot T+ 0.49 \cdot (1-\Theta)\cdot T $$
 
Since group $B$ has only positive examples in Phase II, the expected loss of the algorithm is at least:
$$
\En_{\sigma}\brk*{\sum_{t=1}^T\sum_{f\in\mathcal{F}}p_f^t\ell_f^t}\geq \En_\sigma \brk*{\sum_{t=\Theta\cdot T+1}^T p_{h_n}^t\cdot \mathbf{1}_{g(t)=B}}
$$

We now show that $\En_\sigma \brk*{\sum_{t=\Theta\cdot T+1}^T p_{h_n}^t\cdot \mathbf{1}_{g(t)=B}}\leq 
(\nicefrac{1}{2}+\epsilon)\cdot (1-\Theta)\cdot T$.
If this is not the case, then the algorithm does not have vanishing $\epsilon$-approximate regret with respect to expert $h_p$ since:
\begin{align*}
    \En_{\sigma}\brk*{\sum_{t=1}^T\sum_{f\in\mathcal{F}}p_f^t\ell_f^t-(1+\epsilon)\sum_{t=1}^T \ell_{h_p}^t}&> \prn*{\frac{1}{2}+\epsilon}\cdot (1-\Theta) T-(1+\epsilon)\cdot 0.49\cdot(1-\Theta)T- (1+\epsilon) \Theta T\\
    &\geq \prn*{\frac{1}{2}+\epsilon-0.49-0.49\epsilon}\cdot (1-\Theta)\cdot T-(1+\epsilon)\cdot \Theta\cdot T
    \\ &> (0.01+0.51\epsilon)\cdot \frac{100}{101}\cdot T-\frac{1+\epsilon}{101}\cdot T\geq  \frac{50}{101}\epsilon\cdot T
\end{align*}
Given the above, we now establish a gap in the fairness with respect to false negative rate. Since group $A$ only experiences positive examples when expert $h_n$ is offered probability higher than $\gamma(\epsilon)=\frac{99-2\epsilon}{100}$, 
this means that:
$$
\En_{\sigma}\brk*{FNR_{\mathcal{L}}^{\sigma}(A)}\rightarrow 0.99-0.02\epsilon
$$
Regarding group $B$, we need to take into account the low-probability event that the actual realization has significantly less than $(1-b)(1-\Theta)\cdot T$ examples of group $B$ in Phase II (all are positive examples). This can be handled via similar Chernoff bounds as in the proof of the previous theorem. As a result, the expected false negative rate at group $B$ is:
$$
\En_{\sigma}\brk*{FNR_{\mathcal{L}}^{\sigma}(B)}\rightarrow \frac{\En_\sigma \brk*{\sum_{t=\Theta\cdot T+1}^T p_{h_n}^t\cdot \mathbf{1}_{g(t)=B}}}{\En_\sigma \brk*{\sum_{t=\Theta\cdot T+1}^T
\cdot\mathbf{1}_{g(t)=B}}}\le\frac{\prn*{\nicefrac{1}{2}+\epsilon}\cdot(1-\Theta)\cdot T}{(1-b)\cdot (1-\Theta)\cdot T}= \frac{\nicefrac{1}{2}+\epsilon}{1-b}
$$
which establishes a gap in the fairness with respect to false negative rate of $\alpha\rightarrow \frac{0.49-0.99b}{1-b}$ selecting $\epsilon>0$ appropriately small.
\end{proof}

\section{Fairness in composition with respect to an alternative metric}
\label{sec:equalized_error_rates}
The negative results of the previous section give rise to a natural question of whether fairness in composition can be achieved for some other fairness metric in an online setting. 

We answer this question positively by suggesting the \emph{equalized error rates} metric $EER$ which captures the average loss over the total number of examples (independent of whether this loss comes from false negative or false positive examples). The relevant subset induced by this metric $\mathcal{S}_g^{\sigma}(EER)$ is the set of all examples coming from group $g\in\mathcal{G}$. We again assume that experts are fair in isolation with respect to equalized error rate and show that a simple scheme where we run separately one instance of multiplicative weights for each group achieves fairness in composition (Theorem~\ref{thm:eq_error_rates_positive}). The result holds for general loss functions (beyond pure classification) and is robust to the experts only being approximately fair in isolation. A crucial property we use is that multiplicative weights not only does not perform worse than the best expert; it also does not perform better. In fact, this property holds more generally by online learning algorithms with optimal regret guarantees \tledit{\cite{gofer2016lower}.}
\tlcomment{Wrong citation: Correct citation: \cite{gofer2016lower}}

Interestingly, not all algorithms can achieve fairness in composition even with respect to this refined notion. We provide two algorithm classes where this is unachievable. First, we show that any algorithm (subject to a technical condition satisfied by algorithms such as multiplicative weights and follow the perturbed leader) that ignores the group identity can be unboundedly unfair with respect to equalized error rates (Theorem \ref{thm:single_algo_error_rates}). This suggests that the algorithm needs to actively discriminate based on the groups to achieve fairness with respect to equalized error rates. Second, we show a similar negative statement for any algorithm that satisfies the more involved guarantee of small shifting regret, therefore outperforming the best expert (Theorem \ref{thm:shifting_algo_error_rates}). This suggests that the algorithm used should be good but not too good. This result is, to the best of our knowledge, a first application where shifting regret may not be desirable which may be of independent interest. 

\subsection{The positive result}
We run separate instances of multiplicative weights with a fixed learning rate $\eta$, one for each group. More formally, for each pair of expert $f\in\mathcal{F}$ and group $g\in\mathcal{G}$, we initialize weights $w_{f,g}^1=1$. At round $t=\crl{1,2,\dots,T}$, an example with group context $g(t)$ arrives and the learner selects a probability distribution based to the corresponding weights: $p^t_f=\frac{w_{f,g(t)}^t}{\sum_{j\in\mathcal{F}}w_{j,g(t)}^t}$. Then the weights corresponding to group $g(t)$ are updated exponentially: $w_{f,g}^{t+1}=w_{f,g}^t\cdot (1-\eta)^{\ell_f^t\cdot \textbf{1}\crl{g(t)=g}}$.

\begin{theorem}
\label{thm:eq_error_rates_positive}
\vspace{0.3em}
For any $\alpha>0$ and any $\epsilon<\alpha$ such that running separate instances of multiplicative weights for each group with learning rate $\eta=\min(\epsilon,\nicefrac{\alpha}{6})$ guarantees $\alpha$-fairness in composition and $\epsilon$-approximate regret of at most $O(\nicefrac{|\mathcal{G}|\log(d)}{\epsilon})$.
\end{theorem}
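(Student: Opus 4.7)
The plan is to apply the standard multiplicative-weights (MW) regret analysis separately to each group's subsequence, then combine it with two ingredients: (i) the fairness-in-isolation assumption, which forces every expert to incur the same per-example loss on every group; and (ii) a two-sided ``sandwich'' for MW---namely that the algorithm's cumulative loss is not only at most $(1+\eta)\min_f L_f + \ln(d)/\eta$ but also at least $\min_f L_f$. Together these pin both groups' EERs into the same $O(\eta)$-window around the best expert's EER, which is precisely what $\alpha$-fairness demands.

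Concretely, fix a group $g$, let $T_g$ be the number of rounds with $g(t)=g$, $L_{f,g}=\sum_{t:\,g(t)=g}\ell_f^t$, and $L_{T,g}^{\mathcal{L}}$ the algorithm's cumulative loss on those rounds. The standard MW analysis of the potential $\Phi^t=\sum_f w_{f,g}^t$, using the chord bound $(1-\eta)^x\leq 1-\eta x$ on $[0,1]$, gives the upper estimate $L_{T,g}^{\mathcal{L}}\leq(1+\eta)\min_f L_{f,g}+\ln(d)/\eta$. For the matching lower bound, I would apply Jensen's inequality to the same potential, using that $x\mapsto(1-\eta)^x$ is convex: this gives $\Phi^{t+1}/\Phi^t\geq(1-\eta)^{\sum_f p_f^t\ell_f^t}$, hence $\Phi^{T_g+1}\geq d\cdot(1-\eta)^{L_{T,g}^{\mathcal{L}}}$; combined with $\Phi^{T_g+1}\leq d\cdot(1-\eta)^{\min_f L_{f,g}}$ (since the largest weight corresponds to the smallest-loss expert) and monotonicity of $(1-\eta)^x$, this yields $L_{T,g}^{\mathcal{L}}\geq\min_f L_{f,g}$. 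By fairness-in-isolation, $L_{f,g}/T_g=EER_f^\sigma$ is independent of $g$, so dividing the sandwich by $T_g$ gives $|EER_{\mathcal{L}}^\sigma(g)-EER_{\mathcal{L}}^\sigma(g')|\leq \eta+\ln(d)/(\eta\min(T_g,T_{g'}))$. Taking expectations and invoking the definitional assumption $\En_\sigma[\min(|S_g^\sigma|,|S_{g'}^\sigma|)]=\Omega(T)$---together with a Chernoff argument to convert this in-expectation lower bound into an in-probability one for $T_g$---drives the additive term to $o(1)$, leaving a gap of at most $\eta=\min(\epsilon,\alpha/6)\leq \alpha/6<\alpha$.

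For the $\epsilon$-approximate regret claim, I would decompose the algorithm's global loss across groups and apply the upper half of the sandwich per group: $\sum_g[L_{T,g}^{\mathcal{L}}-(1+\epsilon)L_{f^\star,g}]\leq\sum_g[L_{T,g}^{\mathcal{L}}-(1+\eta)L_{f^\star,g}]\leq|\mathcal{G}|\ln(d)/\eta$, using $\eta\leq\epsilon$ and non-negativity of losses. When $\eta=\epsilon$ this is $O(|\mathcal{G}|\ln d/\epsilon)$ on the nose; when $\eta=\alpha/6<\epsilon$, we have $1/\eta=6/\alpha\leq 6/\epsilon$ because $\epsilon<\alpha$, giving the same order. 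The main technical obstacle I anticipate is controlling $\En_\sigma[1/T_g]$ when $T_g$ is random (under stochastic or adaptively-adversarial group arrivals): since $T_g$ can in principle be zero and EER is undefined there, I would isolate the high-probability event $\{T_g\geq cT\}$, where $1/T_g$ is tame, and crudely bound its complement using the assumed lower bound on $\En_\sigma[\min(T_g,T_{g'})]$ and standard concentration.
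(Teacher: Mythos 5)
Your proposal is correct and follows essentially the same route as the paper: separate multiplicative-weights instances per group, a two-sided sandwich on each group's cumulative loss (the standard upper regret bound plus a lower bound showing MW does not beat the best expert), and fairness in isolation to equate the per-group best-expert error rates, with the regret claim obtained by summing the per-group upper bounds. The only difference is cosmetic: you derive the lower half of the sandwich via Jensen's inequality, giving the exact bound $\hat{L}_g \geq \min_{f} L_{f,g}$, whereas the paper uses $(1-\eta)^x \geq 1-\eta(1+\eta)x$ to get the slightly looser $(1-4\eta)\cdot L_{f^{\star}(g),g}$; both yield the stated $\alpha$-fairness guarantee.
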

\begin{proof}
The proof is based on the property that multiplicative weights performs not only no worse than the best expert in hindsight but also no better. Therefore the average performance of multiplicative weights at each group is approximately equal to the average performance of the best expert in that group. Since the experts are  fair in isolation, the average performance of the best expert in all groups is the same which guarantees the equalized error rates desideratum. We make these arguments formal below.

We follow the classical potential function analysis of multiplicative weights but apply bidirectional bounds to also lower bound the performance of the algorithm by the performance of the comparator. For each group $g\in\mathcal{G}$ and every expert $f\in\mathcal{F}$, let $L_{f,g}=\sum_{t:g(t)=g}\ell_f^t\cdot \mathbf{1}\crl{g(t)=g}$ be the cumulative loss of expert $f$ in examples with group context $g$, and $\hat{L}_g=\sum_{t=1}^T\sum_{f\in\mathcal{F}}p_f^t\ell_f^t\cdot\mathbf{1}\crl{g(t)=g}$ to denote the expected loss of the algorithm on these examples. We also denote the best in hindsight expert on these examples by $f^{\star}(g)=\argmin_{f\in \mathcal{F}}L_{f,g}$. Recall that $w_{f,g}^t=(1-\eta)^{\sum_{\tau\leq t: g(\tau)=g}\ell_f^{\tau}}$ is the weight of expert $f$ in the instance of group $g$ and let $W_{t,g}=\sum_{j\in\mathcal{F}}w_{j,g}^t$ be its potential function. 

To show that the algorithm does not perform much worse than any expert, we follow the classical potential function analysis and, since $(1-\eta)^x\leq 1-\eta x$ for all $x\in[0,1]$ and $\eta\leq 1$, we obtain:
\begin{align*}
    W_{t+1,g}=\sum_{j\in\mathcal{F}}w_{j,g}^t\cdot (1-\eta)^{\ell_j^t\cdot \mathbf{1}\crl{g(t)=g}}&\leq \sum_{j\in\mathcal{F}} w_{j,g}^t\cdot (1-\eta \ell_j^t\cdot\mathbf{1}\crl{g(t)=g})=W_{t,g}\cdot \prn{1-\eta\sum_{j\in\mathcal{F}}p_j^t\ell_j^t}.
\end{align*}
By the classical analysis, for all $f\in\mathcal{F}$ and $g\in\mathcal{G}$:
\begin{align*}
    w_{f,g}^{T+1}=(1-\eta)^{\sum_{t=1}^T \ell_f^t\cdot\mathbf{1}\crl{g^t=g}}\leq W_{T+1,g} \leq d\cdot \prod_{t=1}^T (1-\eta \sum_{j\in\mathcal{F}} p_{j}^t\ell_j^t\cdot \mathbf{1}\crl{g(t)=g})
\end{align*}
where the left inequality follows from the fact that all summands of $W_{T+1,g}$ are positive and the right inequality follows by unrolling $W_{T+1,g}$ and using that $W_{1,g}=d$.

Taking logarithms and using that $-\eta-\eta^2<\ln(1-\eta)<-\eta$ for $\eta<\nicefrac{1}{2}$, this implies that for all $f\in\mathcal{F}$:
\begin{equation}\label{eq:mwu_classical}
    \hat{L}_g \leq (1+\eta)\cdot L_{f,g}+\frac{\ln(d)}{\eta}
\end{equation}

We now use the converse side of the inequalities to show that multiplicative weights also does not perform much better than the best expert in hindsight $f^{\star}(g)$. Using that $(1-\eta)^x\geq 1-\eta(1+\eta) x$ for all $x\in[0,1]$, we obtain:
\begin{align*}
    W_{t+1,g}=\sum_{j\in\mathcal{F}}w_{j,g}^t\cdot (1-\eta)^{\ell_j^t\cdot \mathbf{1}\crl{g(t)=g}}&\geq \sum_{j\in\mathcal{F}} w_{j,g}^t\cdot \prn*{1-\eta(1+\eta)\cdot \ell_j^t\cdot\mathbf{1}\crl{g(t)=g}}\\&=W_{t,g}\cdot \prn*{1-\eta(1+\eta)\sum_{j\in\mathcal{F}}p_i^t\ell_i^t}.
\end{align*}
Using that $f^{\star}(g)$ is the best expert in hindsight, we can upper bound $\sum_{j\in\mathcal{F}}w_{j,g}^t\leq d\cdot \max_{j\in\mathcal{F}}w_{j,g}^t=d\cdot\max_{f\in \mathcal{F}}(1-\eta)^{\sum_{t=1}^t \ell_{f}^{t}\mathbf{1}\crl{g^t=g}} $. Similarly to before, it therefore follows that:
\begin{align*}
    d\cdot(1-\eta)^{\sum_{t=1}^T \ell_{f^{\star}(g)}^t\mathbf{1}\crl{g^t=g}}\geq W_{T+1} \geq d\cdot \prod_{t=1}^{T} \prn*{1-\eta(1+\eta) \sum_{j\in\mathcal{F}} p_j^t\ell_j^t}
\end{align*}
which, for $\eta<\nicefrac{1}{2}$, implies that:
\begin{equation}\label{eq:mwu_opposite_direction}
    \widehat{L}_g \geq (1-4\eta)\cdot L_{f^\star(g),g}
\end{equation}
The expected $\epsilon$-approximate regret of this algorithm is at most $6\cdot|\mathcal{G}|$ times the one of a single multiplicative weights instance (by summing over inequalities \eqref{eq:mwu_classical} for all $g\in\mathcal{G}$ and since $\nicefrac{\epsilon}{6}\leq \eta\leq \epsilon$). What is left to show is that the $\alpha$-fairness in composition guarantee is satisfied, that is there exists $T_0$ (function of $\alpha$ and $\epsilon$) such that when the number of examples from each group is at least $T_0$, the maximum difference between average expected losses across groups is bounded by $\alpha$. Let $g^{\star}$ be the group with the smallest average expected loss. We will show that the maximum difference from the average expected loss of any other group $g$ is at most $\alpha$ for $T_0=\nicefrac{6\ln(d)}{\eta\alpha}$. Since the experts are fair in isolation, we know that $\frac{L_{f,g}}{|\crl{t:g^t=g}|}=\frac{L_{f,g'}}{|\crl{t:g^t=g'}|}$ for all $f\in\mathcal{F}$ and $g,g'\in\mathcal{G}$. Combining this with inequalities \eqref{eq:mwu_classical} and \eqref{eq:mwu_opposite_direction} and the fact that the losses are in $[0,1]$ and $\eta\leq \nicefrac{\alpha}{6}$, we obtain:

\begin{align*}
\frac{\widehat{L}_g}{|\crl{t:g(t)=g}|}-\frac{\widehat{L}_{g^{\star}}}{|\crl{t:g(t)=g^{\star}}|}&\leq  \frac{(1+\eta)\cdot L_{f^{\star}(g),g}}{|\crl{t:g(t)=g}|}+\frac{\ln(d)}{\eta\cdot |\crl{t:g(t)=g}|}- \frac{(1-4\eta)\cdot L_{f^{\star}(g^{\star}),g^{\star}}}{|\crl{t:g(t)=g^{\star}}|}
\\&\leq 5\eta\cdot\frac{ L_{f^{\star}(g^{\star}),g^{\star}}}{|\crl{t:g(t)=g^{\star}}|}+\frac{\ln(d)}{\eta\cdot T_0}\leq\alpha.
\end{align*}
\end{proof}
\begin{remark}
If the instance is instead only approximately fair in isolation with respect to equalized error rates, i.e. the error rates of the two experts are not exactly equal but within some constant $\kappa$, the same analysis implies $(\alpha+\kappa)$-fairness in composition with respect to equalized error rates.
\end{remark}

\subsection{Impossibility results}
\paragraph{Group-unaware algorithms.} In the previous algorithm, it was crucial that the examples of the one group do not interfere with the decisions of the algorithm on the other group. We show that, had we run one multiplicative weights algorithm in a group-unaware way, we would not have accomplished fairness in composition. In fact, this impossibility result holds for any algorithm with vanishing $\epsilon$-approximate regret where the learning dynamic ($p^t$ at each round $t$) is a deterministic function of the difference between the cumumative losses of the experts (without taking into consideration their identity). This is satisfied, for instance by multiplicative weights and follow the perturbed leader with a constant learning rate.
Unlike previous section, the impossibility results for equalized error rates require groups to arrive adversarially (which also occurs in the above positive result). \begin{theorem}\label{thm:single_algo_error_rates}
\vspace{0.3em}
For any $\alpha>0$ and for any $\epsilon>0$, running a single algorithm from the above class in a group-unaware way is $\alpha$-unfair in composition with respect to equalized error rate. 
\end{theorem}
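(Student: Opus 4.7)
The plan is to exhibit a single adversarial sequence (with groups presented adversarially, as permitted in this section) on which any algorithm in the stated class is almost maximally unfair with respect to equalized error rate, while still respecting the vanishing $\epsilon$-approximate regret hypothesis. The construction uses two complementary experts and arranges the timing of the group contexts so that one group always arrives precisely when the algorithm is confident in the wrong expert.

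I would take two groups $\mathcal{G}=\{A,B\}$ and two experts $h_+,h_-$ that always predict $1$ and $0$ respectively, under the probability-of-error loss from Section~\ref{sec:equalized_odds}. Call a round \emph{type-$+$} if $y(t)=+$ (so $\ell_{h_+}^t=0$ and $\ell_{h_-}^t=1$) and \emph{type-$-$} if $y(t)=-$ (so $\ell_{h_+}^t=1$ and $\ell_{h_-}^t=0$). Partition the horizon into four contiguous phases of $N$ rounds each: Phase~I is $(B,\text{type-}+)$; Phase~II is $(A,\text{type-}-)$; Phase~III is $(B,\text{type-}-)$; Phase~IV is $(A,\text{type-}+)$. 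Every group then sees exactly $N$ type-$+$ and $N$ type-$-$ rounds, so both experts have average loss $1/2$ on each group and are fair in isolation with respect to equalized error rate.

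Because the algorithm satisfies the stated technical condition, its distribution satisfies $p^t=\psi(\Delta_t)$ for some fixed $\psi:\mathbb{Z}\to[0,1]$, where $\Delta_t$ denotes the cumulative loss of $h_+$ minus that of $h_-$. By construction $\Delta_t$ moves from $0$ to $-N$ during Phase~I, from $-N$ back to $0$ during Phase~II, from $0$ up to $N$ during Phase~III, and from $N$ back to $0$ during Phase~IV. Since $\psi$ depends only on $\Delta$, the algorithm's play on Phase~I alone is identical to its play on a stand-alone $N$-round experiment of nothing but $(B,\text{type-}+)$ rounds, where the best expert $h_+$ has loss $0$; applying vanishing $\epsilon$-approximate regret to that stand-alone experiment gives that the algorithm's total Phase~I loss $\sum_{d=0}^{N-1}(1-\psi(-d))$ equals $o(N)$, and analogously for Phase~III (whose best expert $h_-$ has loss $0$) that $\sum_{d=0}^{N-1}\psi(d)=o(N)$. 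A direct telescoping identity then shows that Phase~I loss plus Phase~II loss equals $N+\psi(-N)-\psi(0)\in[N-1,\,N+1]$, and symmetrically for Phases~III$+$IV, so Phase~II loss and Phase~IV loss are each at least $N-o(N)$.

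Collecting by group, the algorithm's total loss on $B$ (Phases~I and~III) is $o(N)$ over $|B|=2N$ rounds, whereas on $A$ (Phases~II and~IV) it is $2N-o(N)$ over $|A|=2N$ rounds, so the gap in average expected losses tends to $1$ as $N\to\infty$ and exceeds any fixed $\alpha>0$ for sufficiently large $N$. The vanishing-regret hypothesis is preserved throughout: each expert has total loss $2N$ while the algorithm's total loss is $2N-o(N)$, so $\apx_{\epsilon,T}(f)\le -2\epsilon N+o(N)\le 0$ for either expert $f$. The main obstacle is justifying the per-phase bounds rigorously: vanishing $\epsilon$-approximate regret is formally stated for complete $T$-round sequences, so one must use the technical condition (determinism of $p^t$ in $\Delta_t$) to couple each phase with its own smaller stand-alone sub-experiment and then invoke regret there. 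This coupling step is exactly what forces the restriction to the algorithm class (multiplicative weights, follow the perturbed leader with constant learning rate, and the like).
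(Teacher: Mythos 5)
Your construction is the paper's own instance up to relabeling the groups (four equal-length phases with two complementary constant experts, the cumulative loss difference sweeping $0\to -N\to 0\to N\to 0$, and the disadvantaged group arriving exactly in the two ``catch-up'' phases), and your argument uses the same mechanism as the paper: vanishing approximate regret together with determinism in the loss difference forces $o(N)$ loss on the favored group's phases, and the matching of difference trajectories transfers near-total loss to the other group's phases. The proposal is correct and essentially the same proof; if anything, your stand-alone sub-experiment coupling and telescoping identity make the bookkeeping more explicit than the paper's informal phase-matching and give a gap tending to $1$ rather than merely exceeding $\alpha$.
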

\begin{proof}
The instance has two groups $\mathcal{G}=\crl{A,B}$ that come in an adversarial order, two experts $\mathcal{F}=\crl{f_1,f_2}$, and consists of four phases of equal size. At each phase one predictor is always correct and the other one always incorrect. 
\begin{enumerate}
    \item In Phase I $\crl{1,\dots,\nicefrac{T}{4}}$, examples of group $A$ arrive and the first predictor is correct: $\ell_{f_1}^t=0$ and $\ell_{f_2}^t=1$.
    \item In Phase II $\crl{\nicefrac{T}{4}+1,\dots,\nicefrac{T}{2}}$, examples of group $B$ arrive and the second predictor is the correct one: $\ell_{f_1}^t=1$ and $\ell_{f_2}^t=0$.
    \item In Phase III $\crl{\nicefrac{T}{2},\dots,\nicefrac{3T}{4}+1}$, examples of group $A$ arrive and the second predictor is again the better one, i.e. $\ell_{f_1}^t=1$ and $\ell_{f_2}^t=0$.
    \item Finally, in Phase IV $\crl{\nicefrac{3T}{4}+1,\dots,T}$, examples of group $B$ arrive and now the first predictor is accurate: $\ell_{f_1}^t=0$ and $\ell_{f_2}^t=1$.
\end{enumerate} Note that both experts are fair in isolation with respect to equalized error rates as they both have $50\%$ error rate in each group.

Since the loss of the first expert is $0$ in the first quarter of the setting: $\sum_{t=1}^{\nicefrac{T}{4}}\ell_{f_1}=0$, any algorithm with vanishing approximate regret needs to have sublinear loss during this quarter to be robust against an adversary that continues giving $0$ losses to $f_1$. Therefore, in particular, it holds that:
$$
\sum_{t=1}^{\nicefrac{T}{4}}p_{f_2}< \frac{1-\alpha}{8}{T}.
$$
As a result, the error rate on group $A$ is at most $EER(A)\leq \frac{1-\alpha}{2}$ in Phase I and, since the algorithm's distribution is deterministic based on the difference in the losses, this also applies to Phase III.

Regarding group $B$, note that any time $t$ where an example of group $B$ arrives has a $1-1$ mapping to the time $t-\nicefrac{T}{4}$ where a member from group $A$ came, where the predictions of the algorithm are the same since the difference between losses are the same. Therefore, by our assumption on the dynamic, the cumulative probability the correct expert is upper bounded by $\frac{1-\alpha}{2}$ which implies that group $B$ incurs an equalized error rate of $EER(B)\geq \frac{1+\alpha}{2}$. Thi concludes the proof.
\end{proof}

\paragraph{Shifting algorithms.} The reader may be also wondering whether it suffices to just run separate learning algorithms in the two groups or whether multiplicative weights has a special property. In the following theorem, we show that the latter is the case. In particular, multiplicative weights has the property of not doing better than the best expert in hindsight. The main representative of algorithms that do not have such a property are the algorithms that achieve low approximate regret compared to a shifting benchmark (tracking the best expert). More formally, approximate regret against a shifting comparator $f^{\star}=\prn{f^{\star}(1),\dots,f^{\star}(T)}$ is defined as:
$$
\apx_{\epsilon,T}(f^{\star})=\sum_t p_f^t\ell_f^t-(1+\epsilon)\sum_t \ell_{f^{\star}(t)}^t,
$$
and typical guarantees are $\En\brk{\apx(f^{\star})}=O\prn{ \nicefrac{K(f^{\star})\cdot\ln(dT)}{\epsilon}}$ where $K(f^{\star})=\sum_{t=2}^T \mathbbm{1}\crl{f^{\star}(t)\neq f^{\star}(t-1)}$ is the number of switches in the comparator. We show that any algorithm that can achieve such a guarantee even when $K(f^{\star})=2$ does not satisfy fairness in composition with respect to equalized error rate. This indicates that, for the fairness with equalized error rates purpose, the algorithm not being too good is essential.
\begin{theorem}\label{thm:shifting_algo_error_rates}
\vspace{0.3em}
For any $\alpha<\nicefrac{1}{2}$ and $\epsilon>0$, any algorithm that can achieve the vanishing approximate regret property against shifting comparators $f$ of length $K(f)=2$, running separate instances of the algorithm for each group is $\alpha$-unfair in composition with respect to equalized error rate. 
\end{theorem}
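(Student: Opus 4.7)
The plan is to exploit exactly the property flagged in the surrounding discussion: a shifting-regret algorithm can outperform the best single expert on sequences with a changing best action, but no algorithm can beat loss near $\nicefrac{1}{2}$ per round on an adversarially chosen balanced $\pm$ sequence. I will design two groups so that both experts are \emph{exactly} fair in isolation but the shifting algorithm incurs vanishing loss on one group and average loss near $\nicefrac{1}{2}$ on the other. Concretely, let $\mathcal{G}=\{A,B\}$ arrive adversarially as two consecutive blocks of $T/2$ rounds each, and let the two experts be $f_1\equiv +$ and $f_2\equiv -$ under the $0$--$1$ classification loss. On group $A$, the adversary plays three phases of lengths $T/8$, $T/4$, $T/8$ with labels $+,-,+$ respectively. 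On group $B$, the adversary plays an adversarially chosen \emph{balanced} sequence (exactly $T/4$ positive and $T/4$ negative labels). In both groups each expert makes exactly $T/4$ errors, so both experts are exactly fair in isolation with respect to equalized error rates.

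For group $A$, the shifting comparator $(f_1,f_2,f_1)$ aligned with the three phases has $K=2$ switches and zero cumulative loss; the assumed vanishing $\epsilon$-approximate regret against $K=2$ shifting comparators then bounds the algorithm's cumulative loss on $A$ by $(1+\epsilon)\cdot 0+O(\log(T)/\epsilon)=o(T)$, so $\En_\sigma[EER^\sigma_\mathcal{L}(A)]=o(1)$. For group $B$, I would let the adversary play the balanced sequence that maximizes the (deterministic) algorithm's cumulative loss. The key technical lemma is a standard online-prediction lower bound: for any deterministic online algorithm, the expected cumulative loss on a uniformly random balanced $\pm$ sequence of length $T/2$ is at least $T/4-O(\sqrt{T})$; this follows from the hypergeometric concentration of the residual urn, which implies that the conditional probability $q^t$ of the next label being $+$ has expected deviation from $\nicefrac{1}{2}$ summing to $O(\sqrt{T})$ across rounds, so $\sum_t \En[\min(q^t,1-q^t)]\geq T/4-O(\sqrt{T})$. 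By averaging over balanced sequences, some balanced sequence realizes this lower bound; the adversary plays it on group $B$, yielding $\En_\sigma[EER^\sigma_\mathcal{L}(B)]\geq \nicefrac{1}{2}-o(1)$.

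Combining, the fairness gap between groups $A$ and $B$ is at least $\nicefrac{1}{2}-o(1)$, so for any fixed $\alpha<\nicefrac{1}{2}$ choosing $T$ large enough exceeds $\alpha$ and establishes $\alpha$-unfairness. The main obstacle is the balanced-sequence lower bound: unlike predictable alternating or phase-based sequences, which a tracking algorithm can exploit to drive its loss to zero on group $A$, an adversarially chosen balanced sequence forces any online learner to stay near $\nicefrac{1}{2}$-per-round loss on group $B$, which creates the asymmetric performance across the two groups. All remaining steps are a direct application of the $K=2$ shifting-regret guarantee on $A$ and a straightforward mistake-counting verification of exact fairness in isolation; the argument works uniformly for any fixed $\epsilon>0$ since the $A$-side slack $O(\log(T)/\epsilon)$ is $o(T)$.
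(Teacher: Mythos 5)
Your proposal is correct, but it takes a genuinely different route from the paper's. The paper keeps the "hard" group first and uses an \emph{adaptive} adversary: on group $A$ it puts loss $1$ on whichever expert the algorithm currently favors, forcing per-round loss at least $\nicefrac{1}{2}$, and then replays the experts' realized loss counts on group $B$ as two clean consecutive runs (first all of $f_1$'s mistakes, then all of $f_2$'s), so that a $K=2$ shifting comparator has zero loss on $B$ and the shifting-regret guarantee drives the algorithm's error rate on $B$ below $\nicefrac{1}{2}-\alpha$; fairness in isolation is enforced by making the Phase II/III lengths equal to the realized per-expert losses from Phase I, so no concentration argument is needed and the gap obtained is $\alpha$. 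You swap the roles of the groups and make the hard part \emph{oblivious}: the predictable three-phase $+,-,+$ block lets the $K=2$ guarantee force $o(T)$ loss on one group, while on the other group you invoke a lower bound for prediction of a uniformly random balanced sequence (per-round loss at least $\min(q^t,1-q^t)$, hypergeometric concentration giving $\sum_t \mathbb{E}\lvert q^t-\nicefrac{1}{2}\rvert = O(\sqrt{T})$, then an averaging argument to fix one bad balanced sequence), and fairness in isolation holds by design since every balanced sequence gives each expert exactly $T/4$ errors in each group. What each buys: your argument yields an oblivious adversary for the hard group and a gap approaching $\nicefrac{1}{2}$, at the cost of an extra (standard but nontrivial) concentration-plus-averaging lemma; the paper's argument is more elementary and self-contained, exploiting adaptivity both to force the $\nicefrac{1}{2}$ floor deterministically and to balance the experts' error counts automatically, and it is stated for general loss vectors rather than binary classification. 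Both establish the claimed $\alpha$-unfairness for all $\alpha<\nicefrac{1}{2}$.
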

\begin{proof}
Our instance has two groups $\mathcal{G}=\crl{A,B}$, two experts $\mathcal{F}=\crl{f_1,f_2}$, and three phases. \begin{enumerate}
    \item Phase I lasts for half of the time horizon $\crl{1,\dots,\nicefrac{T}{2}}$ and during this time, we receive examples from group $A$. At round $t$, the adversary selects loss $\ell_{f}^t=1$ for the expert $f\in\mathcal{F}$ that is predicted with higher probability ($p_{f}^t\leq \nicefrac{1}{2}$) and $\ell_h^t=0$ for the other expert $h\in\mathcal{F}-\crl{f}$.  
    \item Phase II lasts $\sum_{\tau=1}^{\nicefrac{T}{2}}\ell_{f_1}^\tau$ rounds and the adversary selects losses $\ell_{f_1}^t=1$ and $\ell_{f_2}^t=0$. 
    \item Phase III lasts $\sum_{\tau=1}^{\nicefrac{T}{2}}\ell_{f_2}^\tau$ rounds and the adversary selects losses $\ell_{f_1}^t=0$ and $\ell_{f_2}^t=1$. 
\end{enumerate} 
Note that the instance is fair in isolation with respect to equalized error rates as the cardinality of both groups is the same (half of the population in each group) and the experts make the same number of mistakes in both groups.

By construction, the algorithm has expected average loss of at least $\nicefrac{1}{2}$ in members of group $A$.

We now focus on group $B$. By the shifting approximate regret guarantee and given that there exists a sequence of experts of length $2$ that has $0$ loss, it holds that the total loss of the algorithm needs to be sublinear on $T$ and, in particular, at most $(\nicefrac{1}{2}-\alpha)\cdot \frac{T}{2}$, which implies an expected error rate of $\nicefrac{1}{2}-\alpha$. Subtracting the two error rates concludes the proof.
\end{proof}

\section{Discussion}
\label{sec:discussion}
In this paper, we introduce the study of avoiding discrimination towards protected groups in online settings with non-i.i.d. examples. Our impossibility results for equalized odds consist of only two phases, which highlights the challenge in correcting for historical biases in online decision making.

Our work also opens up a quest towards definitions that are relevant and tractable in non-i.i.d. online settings for specific tasks. We introduce the notion of equalized error rates that can be a useful metric for non-discrimination in settings where all examples that contribute towards the performance also contribute towards fairness. This is the case in settings that all mistakes are similarly costly as is the case in speech recognition, recommender systems, or online advertising. However, we do not claim that its applicability is universal. For instance, consider college admission with two perfectly balanced groups that correspond to ethnicity (equal size of the two groups and equal number of positive and negatives within any group). A racist program organizer can select to admit all students of the one group and decline the students of the other, while satisfying equalized error rates -- this does not satisfy equalized odds. Given the impossibility result we established for equalized odds, it is interesting to identify definitions that work well for different tasks one encounters in online non-i.i.d. settings. Moreover, although our positive results extend to the case where predictors are vetted to be approximately non-discriminatory, they do not say anything about the case where the predictors do not satisfy this property. We therefore view our work as a first step towards better understanding non-discrimination in non-i.i.d. online settings.

\subsection*{Acknowledgements}
The authors would like to thank Manish Raghavan for useful discussions that improved the presentation of the paper. 

\bibliographystyle{alpha}
\bibliography{bib1}

\end{document}